\documentclass{article}
\usepackage{booktabs,multirow,xcolor,colortbl}
\newcommand{\e}[1]{\times 10^{#1}}

\usepackage{microtype}
\usepackage{graphicx}
\usepackage{subcaption}
\usepackage{booktabs} 
\usepackage{xurl}
\usepackage{mathrsfs}
\usepackage{float}
\usepackage{hyperref}
\usepackage{amsfonts}

\usepackage[preprint]{icml2026}

\usepackage{amsmath}
\usepackage{amssymb}
\usepackage{mathtools}
\usepackage{amsthm}

\usepackage{xurl}
\usepackage[capitalize,noabbrev]{cleveref}

\theoremstyle{plain}

\theoremstyle{definition}

\theoremstyle{remark}

\usepackage[textsize=tiny]{todonotes}
\usepackage[table]{xcolor}
\usepackage[most]{tcolorbox}
\usepackage[framemethod=TikZ]
{mdframed}
\usepackage{pifont} 
\usepackage{tikz}
\usepackage{tabularx}
\usepackage{array}
\newcolumntype{Y}{>{\raggedright\arraybackslash}X}
\newcolumntype{P}[1]{>{\raggedright\arraybackslash}p{#1}}
\usepackage{hyperref}
\hypersetup{
    colorlinks=true,
    linkcolor=orange,
    filecolor=magenta,      
    urlcolor=cyan,
    citecolor=blue,
    pdftitle={MAAT},
    pdfpagemode=FullScreen,
}

\newtheorem{Lemma}{Lemma}
\newtheorem{Definition}{Definition}
\newtheorem{Proposition}{Proposition}
\newtheorem{Corollary}{Corollary}
\definecolor{okgreen}{RGB}{0,128,0}
\definecolor{badred}{RGB}{180,0,0}
\definecolor{amber}{RGB}{184,134,11}

\newcommand{\cgood}{{\textcolor{okgreen}{\ding{51}}}}   
\newcommand{\cbad}{{\textcolor{badred}{\ding{55}}}}    
\newcommand{\cpart}{{\textcolor{amber}{\textasciitilde}}} 

\definecolor{contribgray}{gray}{0.85}
\colorlet{contribgraybg}{contribgray!35}
\colorlet{contribgrayframe}{gray!55}
\newenvironment{contribbox}[1][]{%
  \begin{tcolorbox}[
    enhanced,
    breakable,
    title={#1},
    colback=contribgraybg,
    colframe=contribgrayframe,
    colbacktitle=contribgraybg,
    fonttitle=\bfseries,
    coltitle=black,
    attach boxed title to top left={yshift=-3mm, xshift=2mm},
    boxed title style={size=small, colback=contribgraybg, frame hidden},
    sharp corners,
    rounded corners,
    arc=3mm,
    top=2mm,
    bottom=1mm,
    left=1mm,
    right=1mm,
    boxsep=0.8mm,
    width=\linewidth,
  ]%
}{%
  \end{tcolorbox}
}
\icmltitlerunning{Knowledge-Informed Kernel State Reconstruction }

\begin{document}

\twocolumn[
  \icmltitle{Knowledge-Informed Kernel State Reconstruction\\ from Heterogeneous Partial Observations}



  \icmlsetsymbol{equal}{*}

  \begin{icmlauthorlist}
    \icmlauthor{Luca Muscarnera}{equal,yyy}
    \icmlauthor{Silas {Ruhrberg Estévez}}{equal,yyy}
    \icmlauthor{Samuel Holt}{yyy}
    \icmlauthor{Evgeny Saveliev}{yyy}
    \icmlauthor{Mihaela {van der Schaar}}{yyy}
  \end{icmlauthorlist}

  \icmlaffiliation{yyy}{DAMTP, University of Cambridge, Cambridge, UK}

  \icmlcorrespondingauthor{Luca Muscarnera}{lm2152@cam.ac.uk}

  \icmlkeywords{Dynamical Systems, State Reconstruction, Symbolic Regression, Kernel Methods}

  \vskip 0.3in
]



\printAffiliationsAndNotice{\icmlEqualContribution}  

\begin{abstract}
Real-world scientific systems are rarely observed through complete, regularly sampled state trajectories. Instead, measurements are often partial, noisy, and heterogeneous, providing fragmented views of latent dynamical states. We introduce \texttt{MAAT} (Model Aware Approximation of Trajectories), a framework for knowledge-informed Kernel State Reconstruction in partially observed dynamical systems. \texttt{MAAT} formulates reconstruction in a reproducing kernel Hilbert space and incorporates heterogeneous observation operators together with semantic and structural priors, including non-negativity, conservation constraints, and domain-specific measurement models. This yields smooth, physically consistent state estimates with analytic time derivatives, providing a principled interface between fragmented measurements and downstream mechanistic discovery methods such as symbolic regression. Across nine scientific benchmarks, multiple noise regimes, and a real-world COVID-19 dataset, \texttt{MAAT} substantially reduces trajectory and derivative reconstruction error relative to strong baselines.
\end{abstract}

\section{Introduction}
In many scientific and clinical domains, the variables are not directly observable \citep{rubanova2019latent}. Instead, systems are measured through heterogeneous data sources that provide only partial and indirect information about the underlying dynamics. For example, in oncology, tumour burden or clonal composition cannot be directly or continuously observed \citep{Bartolomucci2025}. Clinicians instead rely on sparse, high-specificity measurements such as imaging or genomic assays, alongside dense but indirect biomarkers derived from blood panels or physiological signals \citep{Sivapalan2023}. These observations differ in temporal resolution, noise characteristics, and their relationship to the latent state, making it non-trivial to reconstruct a coherent trajectory of disease progression.

This setting induces a fundamental data fusion problem:
recovering physically meaningful latent trajectories from
fragmented observations with heterogeneous structure. Mechanistic models expressed as differential equations provide a principled framework for reasoning about such systems \citep{strogatz2018nonlinear, Huang2013, Yu2024}. However, their application in practice is limited by the lack of reliable state trajectories and derivatives.

Existing approaches treat this reconstruction problem only partially. Classical smoothing methods produce continuous trajectories but ignore domain constraints and heterogeneous observation operators \citep{Rasmussen2005}. State-space models enable sensor fusion but require specifying transition dynamics a priori \citep{Kalman1960}. More flexible latent models accommodate partial observations but rely on black-box representations that obscure mechanistic structure \citep{Chen2018neuralode}. As a result, state reconstruction is typically treated as a preprocessing step, rather than as a \emph{representational bottleneck} that determines which downstream analyses are possible. We address this limitation by formulating state reconstruction as a knowledge-informed inference problem in function space, rather than as a purely numerical preprocessing step.

\begin{contribbox}[Contributions]
\textbf{Conceptual.}
We identify knowledge-informed state reconstruction from heterogeneous partial observations as a central bottleneck for mechanistic modeling.

\textbf{Technical.}
We introduce \texttt{MAAT}, a kernel-based framework
that embeds latent trajectories in a reproducing kernel
Hilbert space and incorporates heterogeneous observation
operators together with semantic and structural priors
directly into the reconstruction objective.

\textbf{Empirical.}
Across nine benchmark dynamical systems and a real-world COVID-19 dataset, \texttt{MAAT} improves trajectory and derivative reconstruction under multiple noise regimes relative to strong baselines.
\end{contribbox}
\section{Problem Formulation}
We consider a time-dependent dynamical system characterized by the state variable $x(t) \in \mathbb{R}^d$, governed by a system of first-order ordinary differential equations (ODEs):
\begin{equation}
    \dot{x}(t) = f(x(t)), \quad x(0) = x_0,
    \label{eq:dynamics}
\end{equation}
where $f: \mathbb{R}^d \to \mathbb{R}^d$ is an unknown vector field. In real-world settings, the time derivative $\dot{x}(t)$ is not directly observable. Measurements instead provide partial and noisy observations of the latent state $x(t)$, yielding a dataset $\mathcal{D} = \{(t_i, y_i, \mathcal{H}_i)\}_{i=1}^{N}$ of $N$ observations collected at irregular timestamps $t_i$. Each observation $y_i$ is related to the latent state $x(t_i)$ through a linear observation operator $\mathcal{H}_i$:
\begin{equation}
    y_i = \mathcal{H}_i\!\left(x(t_i)\right) + \epsilon_i, 
    \qquad \epsilon_i \sim \mathcal{N}(0, \Sigma),
    \label{eq:observation}
\end{equation}
where $\epsilon_i$ denotes measurement noise. The generalized observation model captures a core trade-off in experimental design: constraints on sensing, cost, and invasiveness necessitate balancing temporal resolution against state specificity. Consequently, datasets often combine sparse direct measurements (e.g., gene expression) with dense aggregated signals (e.g., blood biomarkers). Moreover, available domain knowledge can be leveraged not only via statistical regularization, but by enforcing interpretable, mechanistic constraints to guide state reconstruction.

\begin{figure}
    \centering
    \includegraphics[width=1.0\linewidth]{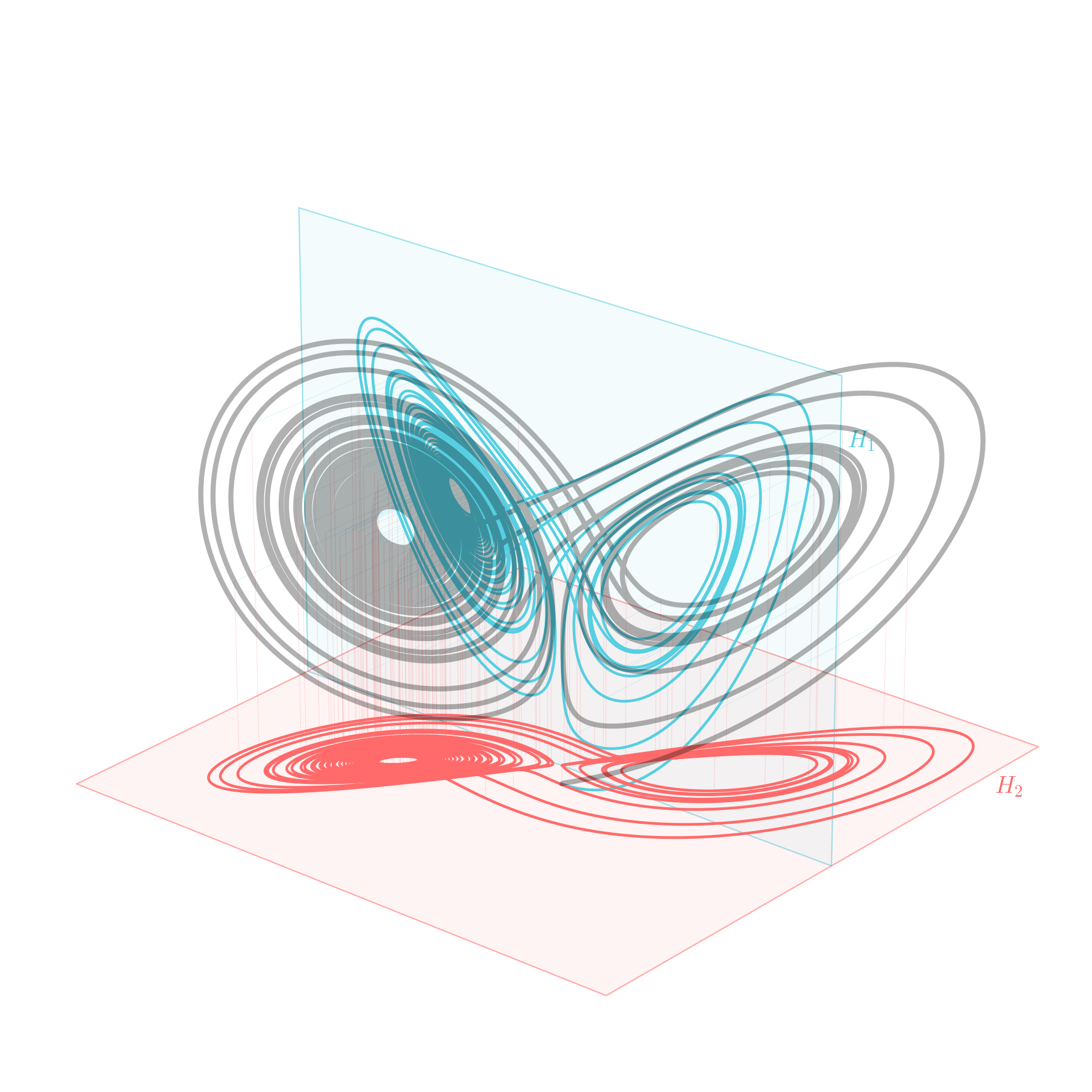}
    \caption{Observation operators represent different projections of the true underlying dynamics, that are observable through dense sampling}
    \label{fig:placeholder}
\end{figure}

\textbf{Optimization Framework.} Standard smoothing techniques such as splines fail to account for the structural constraints $\mathcal{C}$ or the heterogeneous operators $\mathcal{H}_i$. Instead, we formulate the reconstruction as a risk minimization problem in a Reproducing Kernel Hilbert Space (RKHS), denoted by $\mathcal{H}_K$. We seek the trajectory function $\hat{x} \in \mathcal{H}_K$ that minimizes the regularized empirical risk including domain knowledge of the form:
\vspace{-5pt}
\begin{equation}
\begin{split}
    \hat{x} & = \operatorname*{argmin}_{x \in \mathcal{H}_K} 
    \underbrace{\sum_{i=1}^N \| y_i - \mathcal{H}_i(x(t_i)) \|^2}_{\text{Data Fidelity}} + 
    \\
    & +
    \underbrace{\lambda_1 \|x\|_{\mathcal{H}_K}^2}_{\text{Smoothness}} + 
    \underbrace{\lambda_2 \mathcal{R}_{\text{phys}}(x, \mathcal{C})}_{\text{Physical Priors}},
    \label{eq:objective}
\end{split}
\end{equation}
\vspace{-15pt}

where $\|\cdot\|_{\mathcal{H}_K}$ denotes the RKHS norm, and $\mathcal{R}_{\text{phys}}$ is a penalty term enforcing domain knowledge (defined in Section 3). This formulation is equivalent to Maximum A Posteriori (MAP) estimation under Gaussian process priors, providing a principled framework to reconstruct derivatives.

\section{Knowledge-informed Kernel Regression}
We introduce \texttt{MAAT} (Model Aware Approximation of Trajectories), a framework for knowledge informed kernel regression for state estimation in dynamical systems. Unlike prior methods, \texttt{MAAT} goes beyond simple interpolation of the measurements but does the regression under the constraints provided by the physical knowledge about the variables in the systems and the interactions of the sub-components. A schematic overview of the framework is provided in Figure~\ref{fig:overview}:

\textbf{Kernel State Reconstruction (KSR).} We model the trajectory of each state variable $j \in \{1, \dots, d\}$ as a function in a Reproducing Kernel Hilbert Space (RKHS), $\widehat{x}_j(t) = \sum_{\ell=1}^N u_{\ell j} \kappa(t, t_\ell)$, where $\kappa$ is a smooth kernel (e.g., Gaussian) and $U \in \mathbb{R}^{N \times d}$ is a matrix of coefficients to be learned. This is a form of kernel ridge regression, and follows from the Representer Theorem. The coefficients $U$ are found by minimizing a composite loss function that balances fidelity to both snapshots and linear signals, along with regularization terms: 
\begin{equation}
\label{eq:ksr_loss}
\begin{split}
\min_{U} & \frac{w_s}{N_{\text{obs}}}\|\! \ \mathbf K^{\mathrm{obs}} \mathbf U - \mathbf X^{\mathrm{obs}}\! \ \|_F^2 + \sum_i \frac{w_i}{N}\|\! \ \mathbf K \mathbf U \mathbf H_i^\top - \mathbf Y\!\|_F^2 \\
& + \gamma \| \dot{ \mathbf K} \mathbf U - F(\mathbf K \mathbf U)\|_F^2 + \lambda \| \mathbf U\|_F^2,
\end{split}
\end{equation}
where  $K^{\mathrm{obs}}_{k\ell} = \kappa(t_k^{\text{obs}}, t_\ell)$ and  $K_{i\ell} = \kappa(t_i, t_\ell)$ are, respectively, the matrix of the kernel-based inner products between the timesteps where full state is observed and where the signal is observed and the matrix of the kernel-based inner products between the timesteps in which the fine grained signals are observed. The additional penalty from deviation from a prior ($\gamma \| \dot{ \mathbf K} \mathbf U - F(\mathbf K \mathbf U)\|_F^2 $) models the deviation from prior hypotheses on the dynamics of the system.

The object $\dot{\mathbf K}$ denotes the time derivative of the kernel Gram matrix $\mathbf K$. To see how this arises, note that the time derivative of the reconstructed trajectory at a new time point $t$ is given by
\vspace{-5pt}
\begin{align*}
    \partial_t \hat{\boldsymbol{x}}(t)
    = \partial_t \sum_j \boldsymbol e_j \kappa(t, t_j)
    = \mathbf U \, \partial_t \kappa(t, \boldsymbol t),
\end{align*}
\vspace{-20pt}

where the equality follows from the linearity of the model. Thus, the derivative operator acts only on the kernel. Since commonly used kernels (e.g., Gaussian kernels) are $\mathcal{C}^\infty$, their derivatives admit closed-form expressions and can be computed efficiently.

\begin{figure}
    \centering
    \includegraphics[width=\linewidth,trim={1cm 15cm 5cm 3cm}, clip]{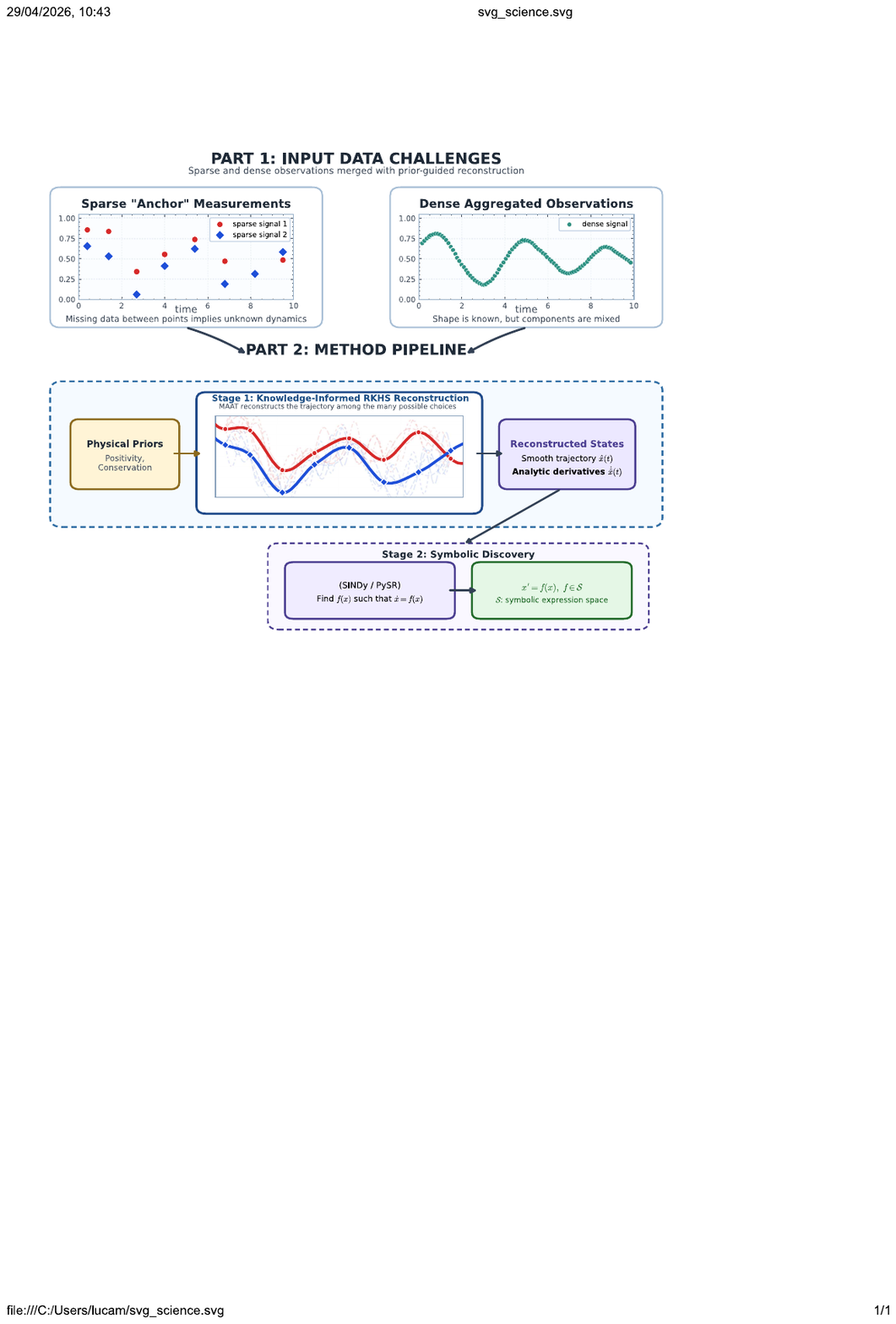}
    \vspace{-15pt}
    \caption{\textbf{Overview of \texttt{MAAT.}} Sparse anchor measurements and dense aggregate observations are combined with physical priors to produce reconstructed states through knowledge-informed kernel regression yielding smooth trajectories and analytical derivatives that can be used for symbolic regression.}
    \vspace{-15pt}
    \label{fig:overview}
\end{figure}

\textbf{Theoretical Justification.} This KSR approach is motivated by two key theoretical properties. First, the composite loss function is a valid surrogate for the true $L^2$ reconstruction error.

\begin{Lemma}[Composite loss is a calibrated surrogate]
Let $H:\mathbb{R}^d\!\to\!\mathbb{R}^p$ be a bounded linear observation operator. For any candidate trajectory $\hat{x}\in L^2([0,T];\mathbb{R}^d)$ and true trajectory $x$, define the risk $\mathcal{R}(\hat{x})=\|x-\hat{x}\|_{L^2}^2+\|H(x-\hat{x})\|_{L^2}^2$. Then
\[
\|x-\hat{x}\|_{L^2}^2 \;\le\; \mathcal{R}(\hat{x}) \;\le\; (1+\|H\|^2)\,\|x-\hat{x}\|_{L^2}^2.
\]
Hence minimizing $\mathcal{R}$ is equivalent to minimizing the $L^2$ reconstruction error up to a constant factor.
\end{Lemma}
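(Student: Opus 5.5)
Write $e=x-\hat x\in L^2([0,T];\mathbb{R}^d)$ for the error. Since $H$ is linear and acts pointwise in time, $H(x-\hat x)$ means the function $t\mapsto H e(t)$, so $\mathcal{R}(\hat x)=\|e\|_{L^2}^2+\|He\|_{L^2}^2$. Both inequalities then reduce to elementary bounds on $\|He\|_{L^2}^2$ in terms of $\|e\|_{L^2}^2$.

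The lower bound is immediate, since $\|He\|_{L^2}^2\ge 0$ gives $\mathcal{R}(\hat x)\ge\|e\|_{L^2}^2$.

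For the upper bound I will use the operator norm of $H$ pointwise. For almost every $t$ we have $|He(t)|\le\|H\|\,|e(t)|$. Squaring and integrating over $[0,T]$ gives $\|He\|_{L^2}^2\le\|H\|^2\|e\|_{L^2}^2$. In particular $He\in L^2$, and $He$ is measurable because $H$ is a continuous linear map composed with a measurable function, so $\mathcal{R}$ is well defined. Adding $\|e\|_{L^2}^2$ to both sides gives $\mathcal{R}(\hat x)\le(1+\|H\|^2)\|e\|_{L^2}^2$.

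The only point needing care is the interpretation. $H$ is stated as a map $\mathbb{R}^d\to\mathbb{R}^p$, so $\|H\|$ is the matrix operator norm and the pointwise argument applies directly. If $H$ were instead a bounded operator between $L^2$ spaces, the same bound $\|He\|\le\|H\|\|e\|$ would hold directly by definition of the operator norm, so the argument covers both readings.

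Finally, the "equivalence" claim follows from the two-sided bound. Writing $E(\hat x)=\|e\|_{L^2}^2$, we have $E\le\mathcal{R}\le(1+\|H\|^2)E$. Hence any $\hat x$ that is $\varepsilon$-optimal for $\mathcal{R}$ has $E(\hat x)\le\mathcal{R}(\hat x)$, and the two objectives are comparable within the factor $1+\|H\|^2$. Both are also minimized, with value $0$, exactly at $\hat x=x$ almost everywhere.
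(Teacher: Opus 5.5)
Your proof is correct and follows essentially the same route as the paper: the lower bound comes from nonnegativity of $\|H(x-\hat{x})\|_{L^2}^2$, and the upper bound from applying $\|Hv\|_2\le\|H\|\,\|v\|_2$ pointwise in $t$ and integrating over $[0,T]$. Your reading of the equivalence claim (comparability within the factor $1+\|H\|^2$, with both objectives vanishing exactly at $\hat{x}=x$) matches what the paper obtains from its metric-equivalence definition and the corollary on convergence of sequences.
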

\begin{proof}[Sketch]
The upper bound follows from the triangle inequality and the definition of the operator norm, $\|H(x-\hat{x})\|_{L^2}^2 \le \|H\|^2\|x-\hat{x}\|_{L^2}^2$. The lower bound is immediate. A full proof is in the Appendix B.
\end{proof}

Second, KSR provides derivative estimates that are fundamentally more robust to noise than standard numerical differentiation.

 \begin{Proposition}[FD noise floor vs KSR]
Assume additive i.i.d. noise with variance $\sigma^2$ on $x(t)$ sampled with step $\Delta t$. Central differences yield derivative error $\mathbb{E}\!\left[\|\widehat{\dot{x}}_{\mathrm{FD}}-\dot{x}\|_2^2\right] = \mathcal{O}(\Delta t^4)+\Omega(\sigma^2/\Delta t^2)$, which has an irreducible error floor. For KSR with regularization $\lambda$, the analytic derivative estimator satisfies $\mathbb{E}\!\left[\|\widehat{\dot{x}}_{\mathrm{KSR}}-\dot{x}\|_2^2\right] = \mathcal{O}(\lambda) + \mathcal{O}(\sigma^2/n)$. Thus, unlike finite differences, KSR avoids high-frequency noise amplification and admits a standard bias–variance trade-off. A proof sketch is provided in the Appendix \ref{app:theory}. This robustness is critical, as accurate derivatives are the most important input for successful symbolic regression \citep{Brunton2016}.
\end{Proposition}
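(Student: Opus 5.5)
The plan has two parts, one for each estimator, followed by a comparison of the rates. Throughout, we take observations $y_k = x(t_k)+\epsilon_k$ on the grid $t_k = k\Delta t$, with $\epsilon_k$ i.i.d., mean zero and variance $\sigma^2$. We also assume $x\in C^3$ and that $x$ lies in the RKHS $\mathcal{H}_K$ of a smooth kernel $\kappa$. We bound the pointwise mean-squared error at a fixed time $t$ and then average over the evaluation grid, so that $\|\cdot\|_2^2$ is read as the normalized squared norm.

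\textbf{Finite differences.} The estimator is $\widehat{\dot x}_{\mathrm{FD}}(t_k)=(y_{k+1}-y_{k-1})/(2\Delta t)$, and we split it into a deterministic part and a noise part.
\begin{itemize}
\item \emph{Deterministic part.} Taylor expanding $x(t_{k\pm1})$ to third order gives $\dot x(t_k)+\tfrac{\Delta t^2}{6}\dddot x(\xi_k)$. The squared bias is therefore $\mathcal{O}(\Delta t^4)$.
\item \emph{Noise part.} This is $(\epsilon_{k+1}-\epsilon_{k-1})/(2\Delta t)$. It has mean zero, so the cross term with the bias vanishes, and its variance is exactly $2\sigma^2/(4\Delta t^2)=\sigma^2/(2\Delta t^2)$.
\end{itemize}
This gives the $\Omega(\sigma^2/\Delta t^2)$ term with an explicit constant, and averaging over $k$ preserves both terms. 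The "irreducible floor" claim follows by minimizing $c_1\Delta t^4+c_2\sigma^2/\Delta t^2$ over $\Delta t$. The minimum is of order $\sigma^{4/3}>0$ for any $\sigma>0$, so refining the grid cannot drive the error to zero.

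\textbf{KSR.} We write the estimator in closed form. With $\gamma=0$ and only direct observations, \eqref{eq:ksr_loss} is kernel ridge regression, and in the standard parametrization this gives $\hat x(t)=\kappa(t,\boldsymbol t)^\top(\mathbf K+n\lambda I)^{-1}\mathbf y$. Differentiating the kernel as in the derivative identity above gives
\[
\widehat{\dot x}_{\mathrm{KSR}}(t)=\partial_t\kappa(t,\boldsymbol t)^\top(\mathbf K+n\lambda I)^{-1}\mathbf y .
\]
This estimator is linear in $\mathbf y$, so its error splits cleanly into a bias from $x(\boldsymbol t)$ and a variance from $\boldsymbol\epsilon$.

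\emph{Bias.} The noiseless estimate is the derivative of the regularized interpolant $x_\lambda$. The reproducing property for derivatives gives $|\partial_t g(t)|\le \sqrt{\partial_s\partial_{s'}\kappa(s,s')|_{s=s'=t}}\,\|g\|_{\mathcal{H}_K}$. Applying this to $g=x_\lambda-x$ reduces the bias to bounding $\|x_\lambda-x\|_{\mathcal{H}_K}^2$, or in a weaker norm, the approximation error. The standard ridge estimate bounds $\|x_\lambda-x\|^2$ by $\lambda\|x\|_{\mathcal{H}_K}^2$ in the empirical norm. We then transfer this to the derivative using the smoothness of $\kappa$ (an interpolation or source-condition argument), which yields the $\mathcal{O}(\lambda)$ term.

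\emph{Variance.} The variance is $\sigma^2\|(\mathbf K+n\lambda I)^{-1}\partial_t\kappa(t,\boldsymbol t)\|_2^2$. We bound it by
\[
\sigma^2\,\|\partial_t\kappa(t,\boldsymbol t)\|_2^2\,\|(\mathbf K+n\lambda I)^{-1}\|_{\mathrm{op}}^2\le \sigma^2\cdot nC\cdot (n\lambda)^{-2}.
\]
For fixed $\lambda$ this is $\mathcal{O}(\sigma^2/n)$, with the constant depending on $\lambda$ and $\sup|\partial_t\kappa|$. The key structural point is that the weights $(\mathbf K+n\lambda I)^{-1}\partial_t\kappa$ do not scale like $1/\Delta t$, because the smoothing kernel, not the grid, sets the differentiation scale. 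That is exactly the absence of noise amplification.

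\textbf{Main obstacle.} The difficult step is the bias: showing that the derivative error, and not merely the function error, is $\mathcal{O}(\lambda)$ uniformly. A direct RKHS argument only gives $\|x_\lambda-x\|_{\mathcal{H}_K}\to0$ without a rate, unless we assume a source condition $x=L_K^{r}g$. I would assume $r\ge 1/2$ (or $r=1$, which gives exactly $\mathcal{O}(\lambda)$ in the $\mathcal{H}_K$ norm) and state it as a hypothesis. With $\lambda$ treated as fixed while $n$ grows, the variance term's $\lambda$-dependence is absorbed into the constant. Since the statement is a sketch-level $\mathcal{O}$ claim, I would flag these constants explicitly rather than optimize them.
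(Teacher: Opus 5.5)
Your proposal follows the paper's own argument: the same Taylor-plus-variance computation for central differences, with the exact $\sigma^2/(2\Delta t^2)$ noise term, and for KSR the same bias--variance split of a linear smoother, with the bias controlled by boundedness of differentiation on $\mathcal{H}_K$ and the variance by the quadratic form $\sigma^2\,\dot k^\top(\mathbf K+\lambda I)^{-2}\dot k$ with $\|\dot k\|_2^2=\mathcal{O}(n)$. Where you deviate you are more careful than the paper: scaling the ridge as $n\lambda$ and using $\|(\mathbf K+n\lambda I)^{-1}\|_{\mathrm{op}}\le(n\lambda)^{-1}$ makes rigorous the step the paper handles with a trace heuristic and an unscaled $\lambda$ (and exposes the $\lambda^{-2}$ in the variance constant), and the source condition you add is genuinely needed, since the $\mathcal{O}(\lambda)$ rate for $\|x_\lambda-x\|_{\mathcal{H}_K}^2$ that the paper simply asserts fails for a general $x\in\mathcal{H}_K$.
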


\begin{table*}[t]
\centering
\caption{State reconstruction MSE ($\downarrow$) semi-synthetic benchmark datasets. Values are mean $\pm$ confidence interval. Best result for each dataset-backend pair is bolded. Noise type: Gaussian.}
\vspace{-5pt}
\label{tab:maat_selected_dataset_results}
\tiny
\setlength{\tabcolsep}{2.0pt}
\renewcommand{\arraystretch}{0.92}
\resizebox{\textwidth}{!}{%
\begin{tabular}{llccccccccc}
\toprule
\multirow{2}{*}{Method} & \multirow{2}{*}{Backend} & \multicolumn{3}{c}{Dynamical systems} & \multicolumn{3}{c}{Epidemiology / dynamics} & \multicolumn{3}{c}{Oncology / viral} \\
\cmidrule(lr){3-5}\cmidrule(lr){6-8}\cmidrule(lr){9-11}
& & CRC & Cons. & Neut. & SEIR & SEIRH & TMDD & Tumor & TDI & Viral \\
\midrule
\multirow{2}{*}{RBF}
& PySR & $1.5\e{-1}\!\pm\!1.8\e{-2}$ & $1.2\e{2}\!\pm\!1.7\e{1}$ & $2.9\e{-2}\!\pm\!2.4\e{-2}$ & $1.1\e{-2}\!\pm\!9.2\e{-4}$ & $9.0\e{-3}\!\pm\!7.9\e{-4}$ & $1.4\e{0}\!\pm\!1.3\e{-1}$ & $1.2\e{1}\!\pm\!1.2\e{0}$ & $5.6\e{1}\!\pm\!1.4\e{1}$ & $9.2\e{-3}\!\pm\!1.4\e{-3}$ \\
& SINDy & $1.5\e{-1}\!\pm\!1.8\e{-2}$ & $1.2\e{2}\!\pm\!1.7\e{1}$ & $2.9\e{-2}\!\pm\!2.3\e{-2}$ & $1.1\e{-2}\!\pm\!9.2\e{-4}$ & $9.0\e{-3}\!\pm\!7.9\e{-4}$ & $1.4\e{0}\!\pm\!1.3\e{-1}$ & $1.2\e{1}\!\pm\!1.3\e{0}$ & $5.7\e{1}\!\pm\!1.5\e{1}$ & $9.3\e{-3}\!\pm\!1.4\e{-3}$ \\
\addlinespace[1.2pt]
\multirow{2}{*}{Cubic}
& PySR & $2.3\e{-1}\!\pm\!4.1\e{-2}$ & $2.1\e{2}\!\pm\!4.7\e{1}$ & $4.7\e{-2}\!\pm\!3.2\e{-2}$ & $1.8\e{-2}\!\pm\!1.9\e{-3}$ & $1.5\e{-2}\!\pm\!2.9\e{-3}$ & $2.2\e{0}\!\pm\!2.3\e{-1}$ & $2.0\e{1}\!\pm\!3.2\e{0}$ & $1.6\e{2}\!\pm\!4.1\e{1}$ & $1.5\e{-2}\!\pm\!2.6\e{-3}$ \\
& SINDy & $2.3\e{-1}\!\pm\!4.1\e{-2}$ & $2.1\e{2}\!\pm\!4.5\e{1}$ & $4.7\e{-2}\!\pm\!3.2\e{-2}$ & $1.8\e{-2}\!\pm\!2.0\e{-3}$ & $1.5\e{-2}\!\pm\!2.9\e{-3}$ & $2.2\e{0}\!\pm\!2.3\e{-1}$ & $1.9\e{1}\!\pm\!2.7\e{0}$ & $1.6\e{2}\!\pm\!4.0\e{1}$ & $1.5\e{-2}\!\pm\!2.5\e{-3}$ \\
\addlinespace[1.2pt]
\multirow{2}{*}{GP}
& PySR & $3.7\e{-1}\!\pm\!3.3\e{-1}$ & $1.8\e{2}\!\pm\!1.9\e{2}$ & $7.5\e{-2}\!\pm\!6.8\e{-2}$ & $5.9\e{-3}\!\pm\!6.9\e{-3}$ & $2.9\e{-3}\!\pm\!2.1\e{-3}$ & $\mathbf{8.4\e{-2}\!\pm\!3.5\e{-2}}$ & $1.2\e{1}\!\pm\!1.4\e{1}$ & $2.9\e{2}\!\pm\!2.8\e{2}$ & $3.1\e{-2}\!\pm\!4.8\e{-2}$ \\
& SINDy & $3.1\e{-1}\!\pm\!2.7\e{-1}$ & $1.6\e{2}\!\pm\!1.8\e{2}$ & $1.7\e{-1}\!\pm\!4.1\e{-1}$ & $2.6\e{-3}\!\pm\!2.2\e{-3}$ & $8.6\e{-4}\!\pm\!4.2\e{-4}$ & $8.7\e{-2}\!\pm\!2.9\e{-2}$ & $1.3\e{1}\!\pm\!1.7\e{1}$ & $3.7\e{2}\!\pm\!3.4\e{2}$ & $1.3\e{-2}\!\pm\!1.2\e{-2}$ \\
\addlinespace[1.2pt]
\multirow{2}{*}{Kalman}
& PySR & $1.1\e{-2}\!\pm\!1.4\e{-3}$ & $\mathbf{9.0\e{0}\!\pm\!1.5\e{0}}$ & $2.5\e{-3}\!\pm\!2.5\e{-3}$ & $7.8\e{-4}\!\pm\!8.6\e{-5}$ & $7.7\e{-4}\!\pm\!9.5\e{-5}$ & $1.0\e{-1}\!\pm\!1.6\e{-2}$ & $8.7\e{-1}\!\pm\!1.3\e{-1}$ & $5.1\e{1}\!\pm\!1.4\e{1}$ & $7.1\e{-4}\!\pm\!1.4\e{-4}$ \\
& SINDy & $1.1\e{-2}\!\pm\!1.4\e{-3}$ & $1.3\e{1}\!\pm\!1.9\e{0}$ & $2.5\e{-3}\!\pm\!2.3\e{-3}$ & $8.4\e{-4}\!\pm\!8.3\e{-5}$ & $7.8\e{-4}\!\pm\!9.6\e{-5}$ & $1.0\e{-1}\!\pm\!1.5\e{-2}$ & $8.8\e{-1}\!\pm\!1.3\e{-1}$ & $4.7\e{1}\!\pm\!1.2\e{1}$ & $8.1\e{-4}\!\pm\!1.3\e{-4}$ \\
\addlinespace[1.2pt]
\multirow{2}{*}{Linear}
& PySR & $7.6\e{-2}\!\pm\!9.0\e{-3}$ & $6.3\e{1}\!\pm\!9.2\e{0}$ & $1.5\e{-2}\!\pm\!1.3\e{-2}$ & $5.7\e{-3}\!\pm\!5.4\e{-4}$ & $4.7\e{-3}\!\pm\!4.5\e{-4}$ & $7.0\e{-1}\!\pm\!6.1\e{-2}$ & $6.0\e{0}\!\pm\!7.7\e{-1}$ & $5.1\e{1}\!\pm\!1.2\e{1}$ & $4.7\e{-3}\!\pm\!8.0\e{-4}$ \\
& SINDy & $7.7\e{-2}\!\pm\!8.9\e{-3}$ & $6.4\e{1}\!\pm\!9.4\e{0}$ & $1.5\e{-2}\!\pm\!1.3\e{-2}$ & $5.7\e{-3}\!\pm\!5.5\e{-4}$ & $4.8\e{-3}\!\pm\!4.5\e{-4}$ & $7.0\e{-1}\!\pm\!6.2\e{-2}$ & $6.0\e{0}\!\pm\!7.4\e{-1}$ & $5.2\e{1}\!\pm\!1.3\e{1}$ & $4.8\e{-3}\!\pm\!7.9\e{-4}$ \\
\addlinespace[1.2pt]
\multirow{2}{*}{NeuralODE}
& PySR & $5.3\e{1}\!\pm\!9.9\e{1}$ & $6.3\e{11}\!\pm\!1.4\e{12}$ & $5.9\e{1}\!\pm\!7.8\e{1}$ & $9.3\e{-1}\!\pm\!6.1\e{-1}$ & $4.2\e{-1}\!\pm\!1.4\e{-1}$ & $1.8\e{0}\!\pm\!1.7\e{0}$ & $1.3\e{10}\!\pm\!2.9\e{10}$ & $5.1\e{3}\!\pm\!7.1\e{3}$ & $1.5\e{0}\!\pm\!1.2\e{0}$ \\
& SINDy & $3.0\e{0}\!\pm\!3.4\e{0}$ & $1.6\e{2}\!\pm\!2.3\e{2}$ & $2.0\e{0}\!\pm\!3.8\e{0}$ & $5.8\e{-1}\!\pm\!3.1\e{-1}$ & $3.3\e{-1}\!\pm\!1.3\e{-1}$ & $2.2\e{0}\!\pm\!2.2\e{0}$ & $1.7\e{2}\!\pm\!1.7\e{2}$ & $6.3\e{1}\!\pm\!5.8\e{1}$ & $9.1\e{-1}\!\pm\!7.8\e{-1}$ \\
\midrule
\multirow{2}{*}{\textbf{\texttt{MAAT}}}
& \multicolumn{1}{>{\columncolor{gray!7}}c}{PySR}
& \multicolumn{1}{>{\columncolor{gray!7}}c}{$\mathbf{4.0\e{-3}\!\pm\!1.7\e{-3}}$}
& \multicolumn{1}{>{\columncolor{gray!7}}c}{$2.0\e{1}\!\pm\!1.6\e{1}$}
& \multicolumn{1}{>{\columncolor{gray!7}}c}{$\mathbf{3.4\e{-4}\!\pm\!4.6\e{-4}}$}
& \multicolumn{1}{>{\columncolor{gray!7}}c}{$\mathbf{2.6\e{-5}\!\pm\!5.0\e{-6}}$}
& \multicolumn{1}{>{\columncolor{gray!7}}c}{$\mathbf{1.7\e{-5}\!\pm\!1.5\e{-6}}$}
& \multicolumn{1}{>{\columncolor{gray!7}}c}{$1.3\e{-1}\!\pm\!2.0\e{-1}$}
& \multicolumn{1}{>{\columncolor{gray!7}}c}{$\mathbf{3.0\e{-1}\!\pm\!1.9\e{-1}}$}
& \multicolumn{1}{>{\columncolor{gray!7}}c}{$\mathbf{4.9\e{0}\!\pm\!6.5\e{0}}$}
& \multicolumn{1}{>{\columncolor{gray!7}}c}{$\mathbf{4.7\e{-5}\!\pm\!2.9\e{-5}}$} \\
& \multicolumn{1}{>{\columncolor{gray!7}}c}{SINDy}
& \multicolumn{1}{>{\columncolor{gray!7}}c}{$\mathbf{1.5\e{-3}\!\pm\!1.6\e{-4}}$}
& \multicolumn{1}{>{\columncolor{gray!7}}c}{$\mathbf{5.8\e{0}\!\pm\!3.1\e{0}}$}
& \multicolumn{1}{>{\columncolor{gray!7}}c}{$\mathbf{4.3\e{-4}\!\pm\!5.2\e{-4}}$}
& \multicolumn{1}{>{\columncolor{gray!7}}c}{$\mathbf{7.9\e{-5}\!\pm\!1.1\e{-5}}$}
& \multicolumn{1}{>{\columncolor{gray!7}}c}{$\mathbf{4.1\e{-5}\!\pm\!5.0\e{-6}}$}
& \multicolumn{1}{>{\columncolor{gray!7}}c}{$\mathbf{4.8\e{-3}\!\pm\!4.2\e{-4}}$}
& \multicolumn{1}{>{\columncolor{gray!7}}c}{$\mathbf{1.2\e{-1}\!\pm\!4.2\e{-2}}$}
& \multicolumn{1}{>{\columncolor{gray!7}}c}{$\mathbf{1.8\e{0}\!\pm\!4.7\e{-1}}$}
& \multicolumn{1}{>{\columncolor{gray!7}}c}{$\mathbf{1.3\e{-4}\!\pm\!2.6\e{-5}}$} \\
\bottomrule
\end{tabular}}
\vspace{-10pt}
\end{table*}
\section{RELATED WORK}
We position \texttt{MAAT} within the landscape of derivative estimation and knowledge-informed machine learning using the structural criteria in Table~\ref{tab:estimators-capabilities}. A broader comparison with prior work is deferred to Appendix~\ref{app:extended_results}.

\textbf{Numerical and Smoothing Baselines.} Classical derivative estimation forms the foundation of most symbolic regression pipelines. While \emph{Finite differences} are computationally efficient, they amplify noise in low-SNR regimes. Windowed methods like \emph{Savitzky–Golay} \citep{Steinier1972} and variational approaches like \emph{TVRegDiff} \citep{Chartrand2011} improve robustness but provide discrete numerical outputs rather than analytic forms. While \emph{Cubic Splines} \citep{deBoor1978} and \emph{RBF Smoothing} \citep{Buhmann2003} provide differentiable surrogates, they typically only operate on single-channel, regularly sampled data. 

\textbf{Probabilistic and Filtering Estimators.} \emph{Gaussian Processes (GPs)} \cite{Rasmussen2005} and \emph{Kalman Filters} \cite{Kalman1960} offer a principled treatment of uncertainty and irregular sampling. GPs provide analytic derivatives through kernel differentiation; however, standard kernels struggle to scale to high-dimensional dynamical systems and do not natively support structural physical priors like mass conservation across state transitions. 

\textbf{Deep Latent Dynamics.}
Modern deep learning approaches, such as \emph{Neural ODEs} \citep{Chen2018neuralode}, utilize neural vector fields to represent latent dynamics.
\emph{Universal Differential Equations} (UDEs) \citep{Rackauckas2020} embed neural networks within mechanistic scaffolds, improving inductive bias and data efficiency, but they do not address heterogeneous observation operators.

\textbf{Comparison with Physics-Informed Kernel Learning}
A distinct but complementary line of research is the recently proposed \textit{Physics-Informed Kernel Learning} (PIKL) framework \citep{doumeche2025physics}. 
Similar to \texttt{MAAT}, PIKL reformulates the learning problem in a Reproducing Kernel Hilbert Space (RKHS) to overcome the training instabilities and lack of theoretical guarantees associated with Physics-Informed Neural Networks (PINNs). 
However, the two methods target fundamentally different objectives. 
PIKL addresses the \textit{forward} or \textit{hybrid modeling} problem: it assumes the differential operator $\mathcal{D}$ is \textbf{known} a priori (e.g., the wave or heat equation) and utilizes the kernel formulation to minimize a physics-informed risk $\mathcal{J}(f) = \|Y - f(X)\|^2 + \lambda \|\mathcal{D}f\|^2$, efficiently approximating the kernel via Fourier features to solve the PDE.

\section{EXPERIMENTS}
We evaluate \texttt{MAAT} on a series of diverse benchmarks to assess its performance in state reconstruction and downstream equation discovery. To evaluate the effectiveness of our state reconstruction, we assess the derived trajectories for downstream symbolic regression (SR) using two of the most popular SR algorithms: SINDy \cite{Brunton2016} and PySR \cite{Cranmer2023}.

\textbf{Experimental Setup.}
We evaluate our method on a total of ten datasets spanning diverse domains. Detailed descriptions of the underlying dynamical systems and data-generation procedures are provided in Appendix \ref{app:experimental_details}. For baseline comparisons in the state-reconstruction stage, we consider a broad set of standard smoothing and inference methods. Each method is used to reconstruct state trajectories and derivatives, which are then provided as input to the same symbolic regression (SR) engines for fair comparison.

\textbf{Penalizing Derivative Magnitude.} To test the robustness of our approach against noise, we analyze the impact of penalizing the magnitude of the derivative during reconstruction. 
We observed that in high-noise regimes, standard smoothing techniques often preserve high-frequency artifacts, leading to erroneous derivative estimates. By adding a penalty term to the derivative magnitude, \texttt{MAAT} recovers smoother, more physically plausible trajectories than baselines (see Table \ref{tab:maat_selected_dataset_results}). Additional results on correlated Gaussian and Student T noise are provided in Appendix \ref{app:extended_results}.

\begin{table}[!htb]
\centering
\caption{Effect of structural priors on \texttt{MAAT}. Values are state reconstruction MSE ($\downarrow$), reported as mean $\pm$ confidence interval. Best result per setting is bolded.}
\vspace{-5pt}
\label{tab:maat_priors_seir}
\scriptsize
\setlength{\tabcolsep}{3.2pt}
\renewcommand{\arraystretch}{0.92}
\resizebox{0.48\textwidth}{!}{%
\begin{tabular}{llcccc}
\toprule
\multirow{2}{*}{Noise} & \multirow{2}{*}{Dataset}
& \multicolumn{2}{c}{PySR} & \multicolumn{2}{c}{SINDy} \\
\cmidrule(lr){3-4}\cmidrule(lr){5-6}
& & Plain & +priors & Plain & +priors \\
\midrule
\multirow{2}{*}{Corr. Gauss.}
& SEIR  & $2.39\e{-5}\!\pm\!3.96\e{-6}$ & $\mathbf{2.00\e{-5}\!\pm\!1.80\e{-6}}$
        & $7.87\e{-5}\!\pm\!9.34\e{-6}$ & $\mathbf{7.57\e{-5}\!\pm\!9.87\e{-6}}$ \\
& SEIRH & $1.73\e{-5}\!\pm\!2.61\e{-6}$ & $\mathbf{1.52\e{-5}\!\pm\!1.24\e{-6}}$
        & $4.22\e{-5}\!\pm\!6.04\e{-6}$ & $\mathbf{3.85\e{-5}\!\pm\!6.80\e{-6}}$ \\

\midrule
\multirow{2}{*}{Gaussian}
& SEIR  & $2.58\e{-5}\!\pm\!5.01\e{-6}$ & $\mathbf{2.19\e{-5}\!\pm\!1.57\e{-6}}$
        & $7.91\e{-5}\!\pm\!1.09\e{-5}$ & $\mathbf{7.57\e{-5}\!\pm\!1.01\e{-5}}$ \\
& SEIRH & $1.71\e{-5}\!\pm\!1.51\e{-6}$ & $\mathbf{1.48\e{-5}\!\pm\!1.64\e{-6}}$
        & $4.08\e{-5}\!\pm\!4.99\e{-6}$ & $\mathbf{3.72\e{-5}\!\pm\!5.19\e{-6}}$ \\

\midrule
\multirow{2}{*}{Student-$t$}
& SEIR  & $2.37\e{-5}\!\pm\!3.07\e{-6}$ & $\mathbf{2.07\e{-5}\!\pm\!1.29\e{-6}}$
        & $7.69\e{-5}\!\pm\!8.19\e{-6}$ & $\mathbf{7.38\e{-5}\!\pm\!7.69\e{-6}}$ \\
& SEIRH & $1.65\e{-5}\!\pm\!1.14\e{-6}$ & $\mathbf{1.36\e{-5}\!\pm\!1.37\e{-6}}$
        & $4.12\e{-5}\!\pm\!4.28\e{-6}$ & $\mathbf{3.68\e{-5}\!\pm\!4.14\e{-6}}$ \\

\bottomrule
\vspace{-20pt}
\end{tabular}}
\end{table}

\textbf{Incorporating Dynamical System Structure Knowledge.}
We evaluated the ability of \texttt{MAAT} to incorporate \emph{structural priors} derived from known system semantics.  We consider the SEIR and SEIRH epidemiological models, where domain knowledge permits enforcing conservation of mass, non-negativity of all compartments, and monotonicity constraints implied by irreversible transitions (specifically, $R'(t)\ge 0$ and $S'(t)\le 0$). By injecting this physical prior into the system, we can substantially reduce the MSE of the recovered trajectories of  \texttt{MAAT}  (see Table \ref{tab:maat_priors_seir}).

\textbf{Real-world data modeling.}
We further evaluate our method on a real-world epidemiological dataset from the COVID-19 pandemic. As shown in Table~\ref{tab:covid-test-mse-ci}, \texttt{MAAT} achieves substantially lower reconstruction error compared to all baselines when coupled with SINDy, demonstrating its effectiveness in recovering realistic disease dynamics from noisy observational data.

\section{DISCUSSION}
We present \texttt{MAAT}, a framework for knowledge-informed state reconstruction from heterogeneous partial observations. By combining kernel-based continuous-time reconstruction with domain-specific observation operators and priors, \texttt{MAAT} provides physically meaningful trajectories and analytic derivatives for downstream mechanistic analysis, including symbolic regression.

\begin{table}[!htb]
\centering
\scriptsize
\caption{\textbf{COVID-19 benchmark (SINDy).} Test MSE across 5 seeds (mean $\pm$ 95\% CI).}
\label{tab:covid-test-mse-ci}
\setlength{\tabcolsep}{6pt}
\begin{tabular}{lcc}
\toprule
\textbf{Method} & \textbf{Test MSE} & \textbf{95\% CI} \\
\midrule
\texttt{MAAT} & $\mathbf{6.33\e{-5}}$ & $\pm\,1.07\e{-5}$ \\
RBF & $9.64\e{-4}$ & $\pm\,6.51\e{-4}$ \\
Savitzky--Golay & $9.73\e{-4}$ & $\pm\,6.47\e{-4}$ \\
TVRegDiff & $9.73\e{-4}$ & $\pm\,6.47\e{-4}$ \\
Linear & $9.80\e{-4}$ & $\pm\,6.53\e{-4}$ \\
Kalman filter & $9.89\e{-4}$ & $\pm\,6.68\e{-4}$ \\
Cubic & $9.99\e{-4}$ & $\pm\,6.72\e{-4}$ \\
Gaussian Process & $6.92\e{-2}$ & $\pm\,4.55\e{-2}$ \\
\bottomrule
\end{tabular}
\end{table}

\textbf{Limitations.}
The effectiveness of \texttt{MAAT} depends on the informativeness of the available observations and the validity of the injected priors. If measurements are too sparse, observation operators are uninformative, or priors are misspecified, reconstruction may be biased or underdetermined relative to purely data-driven smoothers. Our current implementation also assumes that relevant structural information, such as subsystem organization or admissible constraints, is known \emph{a priori}. While this is natural in domains such as quantitative systems pharmacology (QSP), where modular structure is often defined through absorption, distribution, receptor binding, and response subsystems \citep{Kaddi2018}, this assumption may not hold in less structured settings. 

\textbf{Nonlinear/Uncertain observation operators \& Future Work} \texttt{MAAT} works under the assumption that observation operators are known and linear. While this assumption is practically reasonable in many cases, the introduction of nonlinear operators or operators subject to uncertainty may enrich the spectrum of the possible phenomena that can be captured. We note, however, that nonlinearity can be introduced in \texttt{MAAT} sacrificing the convenience of the regression problem in Matrix Form. In fact if we define $\mathcal H : \mathbb R^d \to \mathbb R^m$ as a nonlinear operator acting on $\boldsymbol{x}$, the parametrization of $\widehat{\boldsymbol{x}}$ through control variables $\{u_{ij}\}_{ij}$ is still transparent to optimization, although regularity conditions on $\mathcal H$ and a priori knowledge on its nature are needed.

\textbf{Clinical and Translational Impact.}
A central motivation for \texttt{MAAT} is its alignment with how mechanistic modeling is practiced in medicine and pharmacology. In QSP, models are typically assembled from interpretable modules that describe biological and pharmacological processes, which are then coupled to explain patient-level outcomes \citep{Helmlinger2019}. \texttt{MAAT} supports this workflow by fusing sparse, subsystem-specific measurements with denser indirect signals while enforcing clinically meaningful constraints. This makes it well suited to model-informed drug development, where transparent and physiologically plausible models are needed for dose selection, safety evaluation, and regulatory decision-making \citep{FDA_MIDD_Program,EMA_PBPK_2018,Peterson2015}.

\section*{Impact Statement}
This paper presents work whose goal is to advance the field of Machine
Learning. There are many potential societal consequences of our work, none
which we feel must be specifically highlighted here.

\bibliography{main}

@article{doumeche2025physics,
  title={Physics-informed kernel learning},
  author={Doum{\`e}che, Nathan and Bach, Francis and Biau, G{\'e}rard and Boyer, Claire},
  journal={Journal of Machine Learning Research},
  volume={26},
  number={124},
  pages={1--39},
  year={2025}
}

@article{virgolin2022symbolic,
  title        = {Symbolic Regression is {NP\!-\!hard}},
  author       = {Virgolin, Marco and Pissis, Solon P.},
  journal      = {Transactions on Machine Learning Research},
  year         = {2022},
  note         = {Preprint on arXiv:2207.01018},
}

@misc{dugan2020occamnet,

  author = {Dugan,  Owen and Dangovski,  Rumen and Costa,  Allan and Kim,  Samuel and Goyal,  Pawan and Jacobson,  Joseph and Soljačić,  Marin},
  title = {OccamNet: A Fast Neural Model for Symbolic Regression at Scale},
  publisher = {arXiv},
  year = {2020},
  copyright = {arXiv.org perpetual,  non-exclusive license}
}

@article{makke2024interpretable,
  title = {Interpretable scientific discovery with symbolic regression: a review},
  volume = {57},
  ISSN = {1573-7462},
  number = {1},
  journal = {Artificial Intelligence Review},
  publisher = {Springer Science and Business Media LLC},
  author = {Makke,  Nour and Chawla,  Sanjay},
  year = {2024},
  month = jan 
}

@article{Bjrnstad2020,
  title = {The SEIRS model for infectious disease dynamics},
  volume = {17},
  ISSN = {1548-7105},
  number = {6},
  journal = {Nature Methods},
  publisher = {Springer Science and Business Media LLC},
  author = {Bjørnstad,  Ottar N. and Shea,  Katriona and Krzywinski,  Martin and Altman,  Naomi},
  year = {2020},
  month = jun,
  pages = {557–558}
}

@article{Brunton2016,
  title = {Discovering governing equations from data by sparse identification of nonlinear dynamical systems},
  volume = {113},
  ISSN = {1091-6490},
  number = {15},
  journal = {Proceedings of the National Academy of Sciences},
  publisher = {Proceedings of the National Academy of Sciences},
  author = {Brunton,  Steven L. and Proctor,  Joshua L. and Kutz,  J. Nathan},
  year = {2016},
  month = mar,
  pages = {3932–3937}
}

@article{Rackauckas2020,
  title = {Universal Differential Equations for Scientific Machine Learning},
  publisher = {Springer Science and Business Media LLC},
  author = {Rackauckas,  Christopher and Ma,  Yingbo and Martensen,  Julius and Warner,  Collin and Zubov,  Kirill and Supekar,  Rohit and Skinner,  Dominic and Ramadhan,  Ali and Edelman,  Alan},
  year = {2020},
  month = aug 
}

@article{Zhai2025,
  title = {Bridging known and unknown dynamics by transformer-based machine-learning inference from sparse observations},
  volume = {16},
  ISSN = {2041-1723},
  number = {1},
  journal = {Nature Communications},
  publisher = {Springer Science and Business Media LLC},
  author = {Zhai,  Zheng-Meng and Stern,  Benjamin D. and Lai,  Ying-Cheng},
  year = {2025},
  month = aug 
}

@inproceedings{rubanova2019latent,
  title={Latent ordinary differential equations for irregularly-sampled time series},
  author={Rubanova, Yulia and Chen, Ricky TQ and Duvenaud, David},
  booktitle={Advances in Neural Information Processing Systems},
  volume={32},
  year={2019}
}

@inproceedings{Chen2018neuralode,
 author = {Chen, Ricky T. Q. and Rubanova, Yulia and Bettencourt, Jesse and Duvenaud, David K},
 booktitle = {Advances in Neural Information Processing Systems},
 editor = {S. Bengio and H. Wallach and H. Larochelle and K. Grauman and N. Cesa-Bianchi and R. Garnett},
 pages = {},
 publisher = {Curran Associates, Inc.},
 title = {Neural Ordinary Differential Equations},
 volume = {31},
 year = {2018}
}

@article{Helmlinger2019,
  title = {Quantitative Systems Pharmacology: An Exemplar Model‐Building Workflow With Applications in Cardiovascular,  Metabolic,  and Oncology Drug Development},
  volume = {8},
  ISSN = {2163-8306},
  number = {6},
  journal = {CPT: Pharmacometrics \&amp; Systems Pharmacology},
  publisher = {Wiley},
  author = {Helmlinger,  Gabriel and Sokolov,  Victor and Peskov,  Kirill and Hallow,  Karen M. and Kosinsky,  Yuri and Voronova,  Veronika and Chu,  Lulu and Yakovleva,  Tatiana and Azarov,  Ivan and Kaschek,  Daniel and Dolgun,  Artem and Schmidt,  Henning and Boulton,  David W. and Penland,  Robert C.},
  year = {2019},
  month = jun,
  pages = {380–395}
}

@article{Peterson2015,
  title = {FDA Advisory Meeting Clinical Pharmacology Review Utilizes a Quantitative Systems Pharmacology (QSP) Model: A Watershed Moment?},
  volume = {4},
  ISSN = {2163-8306},
  number = {3},
  journal = {CPT: Pharmacometrics \&amp; Systems Pharmacology},
  publisher = {Wiley},
  author = {Peterson,  MC and Riggs,  MM},
  year = {2015},
  month = mar,
  pages = {189–192}
}

@inproceedings{
kacprzyk2025no,
title={No Equations Needed: Learning System Dynamics Without Relying on Closed-Form {ODE}s},
author={Krzysztof Kacprzyk and Mihaela van der Schaar},
booktitle={The Thirteenth International Conference on Learning Representations},
year={2025},
}

@article{Rudy2017,
  title = {Data-driven discovery of partial differential equations},
  volume = {3},
  ISSN = {2375-2548},
  number = {4},
  journal = {Science Advances},
  publisher = {American Association for the Advancement of Science (AAAS)},
  author = {Rudy,  Samuel H. and Brunton,  Steven L. and Proctor,  Joshua L. and Kutz,  J. Nathan},
  year = {2017},
  month = apr 
}

@article{Kaheman2020,
  title = {SINDy-PI: a robust algorithm for parallel implicit sparse identification of nonlinear dynamics},
  volume = {476},
  ISSN = {1471-2946},
  number = {2242},
  journal = {Proceedings of the Royal Society A: Mathematical,  Physical and Engineering Sciences},
  publisher = {The Royal Society},
  author = {Kaheman,  Kadierdan and Kutz,  J. Nathan and Brunton,  Steven L.},
  year = {2020},
  month = oct 
}

@article{Champion2019,
  title = {Data-driven discovery of coordinates and governing equations},
  volume = {116},
  ISSN = {1091-6490},
  number = {45},
  journal = {Proceedings of the National Academy of Sciences},
  publisher = {Proceedings of the National Academy of Sciences},
  author = {Champion,  Kathleen and Lusch,  Bethany and Kutz,  J. Nathan and Brunton,  Steven L.},
  year = {2019},
  month = oct,
  pages = {22445–22451}
}

@article{Lu2022,
  title = {Discovering sparse interpretable dynamics from partial observations},
  volume = {5},
  ISSN = {2399-3650},
  number = {1},
  journal = {Communications Physics},
  publisher = {Springer Science and Business Media LLC},
  author = {Lu,  Peter Y. and Ariño Bernad,  Joan and Soljačić,  Marin},
  year = {2022},
  month = aug 
}

@article{Raissi2019,
  title = {Physics-informed neural networks: A deep learning framework for solving forward and inverse problems involving nonlinear partial differential equations},
  volume = {378},
  ISSN = {0021-9991},
  journal = {Journal of Computational Physics},
  publisher = {Elsevier BV},
  author = {Raissi,  M. and Perdikaris,  P. and Karniadakis,  G.E.},
  year = {2019},
  month = feb,
  pages = {686–707}
}

@misc{Cranmer2023,
  author = {Cranmer,  Miles},
  title = {Interpretable Machine Learning for Science with PySR and SymbolicRegression.jl},
  publisher = {arXiv},
  year = {2023},
  copyright = {arXiv.org perpetual,  non-exclusive license}
}

@misc{Stephens2019gplearn,
  author       = {Trevor Stephens},
  title        = {gplearn: Genetic Programming in Python},
  year         = {2019},
  note         = {First released 2015},
  howpublished = {\url{https://gplearn.readthedocs.io/en/stable/index.html}}
}

@inproceedings{
qian2022dcode,
title={D-{CODE}: Discovering Closed-form {ODE}s from Observed Trajectories},
author={Zhaozhi Qian and Krzysztof Kacprzyk and Mihaela van der Schaar},
booktitle={International Conference on Learning Representations},
year={2022},
}

@inproceedings{Holt2022laplace,
  author={Samuel Holt and Zhaozhi Qian and Mihaela van der Schaar},
  title={Neural Laplace: Learning diverse classes of differential equations in the Laplace domain},
  year={2022},
  cdate={1640995200000},
  pages={8811-8832},
  booktitle={ICML},
 
}

@inproceedings{
holt2024datadriven,
title={Data-Driven Discovery of Dynamical Systems in Pharmacology using Large Language Models},
author={Samuel Holt and Zhaozhi Qian and Tennison Liu and Jim Weatherall and Mihaela van der Schaar},
booktitle={The Thirty-eighth Annual Conference on Neural Information Processing Systems},
year={2024},
}

@inproceedings{ dascoli2024odeformer, title={{ODEF}ormer: Symbolic Regression of Dynamical Systems with Transformers}, author={St{\'e}phane d'Ascoli and S{\"o}ren Becker and Philippe Schwaller and Alexander Mathis and Niki Kilbertus}, booktitle={The Twelfth International Conference on Learning Representations}, year={2024}, url={https://openreview.net/forum?id=TzoHLiGVMo} }

@inproceedings{ holt2023deep, title={Deep Generative Symbolic Regression}, author={Samuel Holt and Zhaozhi Qian and Mihaela van der Schaar}, booktitle={The Eleventh International Conference on Learning Representations }, year={2023},  }

@article{Kaddi2018,
  title = {Quantitative Systems Pharmacology Modeling of Acid Sphingomyelinase Deficiency and the Enzyme Replacement Therapy Olipudase Alfa Is an Innovative Tool for Linking Pathophysiology and Pharmacology},
  volume = {7},
  ISSN = {2163-8306},
  number = {7},
  journal = {CPT: Pharmacometrics \&amp; Systems Pharmacology},
  publisher = {Wiley},
  author = {Kaddi,  Chanchala D. and Niesner,  Bradley and Baek,  Rena and Jasper,  Paul and Pappas,  John and Tolsma,  John and Li,  Jing and van Rijn,  Zachary and Tao,  Mengdi and Ortemann‐Renon,  Catherine and Easton,  Rachael and Tan,  Sharon and Puga,  Ana Cristina and Schuchman,  Edward H. and Barrett,  Jeffrey S. and Azer,  Karim},
  year = {2018},
  month = jun,
  pages = {442–452}
}

@misc{FDA_MIDD_Program,
  author       = {{U.S. FDA}},
  title        = {Model-Informed Drug Development (MIDD) Paired Meeting Program},
  year         = {2023},
  howpublished = {\url{https://www.fda.gov/drugs/development-resources/model-informed-drug-development-paired-meeting-program}},
  note         = {Accessed 2025-10-02}
}

@misc{EMA_PBPK_2018,
  author       = {{European Medicines Agency}},
  title        = {Guideline on the Qualification and Reporting of Physiologically Based Pharmacokinetic (PBPK) Modelling and Simulation},
  year         = {2018},
  howpublished = {\url{https://www.ema.europa.eu/en/documents/scientific-guideline/draft-guideline-qualification-and-reporting-physiologically-based-pharmacokinetic-pbpk-modelling-and-simulation_en.pdf}},
  note         = {Accessed 2025-10-02}
}

@article{Steinier1972,
  title = {Smoothing and differentiation of data by simplified least square procedure},
  volume = {44},
  ISSN = {1520-6882},
  number = {11},
  journal = {Analytical Chemistry},
  publisher = {American Chemical Society (ACS)},
  author = {Steinier,  Jean. and Termonia,  Yves. and Deltour,  Jules.},
  year = {1972},
  month = sep,
  pages = {1906–1909}
}

@article{Chartrand2011,
  title = {Numerical Differentiation of Noisy,  Nonsmooth Data},
  volume = {2011},
  ISSN = {2090-5572},
  journal = {ISRN Applied Mathematics},
  publisher = {Wiley},
  author = {Chartrand,  Rick},
  year = {2011},
  month = may,
  pages = {1–11}
}

@book{deBoor1978,
author = {de Boor, Carl},
year = {1978},
month = {01},
pages = {},
title = {A Practical Guide to Splines},
volume = {Volume 27},
journal = {Applied Mathematical Sciences, New York: Springer, 1978},
doi = {10.2307/2006241}
}

@book{Buhmann2003,
  title = {Radial Basis Functions: Theory and Implementations},
  ISBN = {9780511543241},
  publisher = {Cambridge University Press},
  author = {Buhmann,  Martin D.},
  year = {2003},
  month = jul 
}

@article{Wang2021,
  title = {EXPLICIT ESTIMATION OF DERIVATIVES FROM DATA AND DIFFERENTIAL EQUATIONS BY GAUSSIAN PROCESS REGRESSION},
  volume = {11},
  ISSN = {2152-5080},
  number = {4},
  journal = {International Journal for Uncertainty Quantification},
  publisher = {Begell House},
  author = {Wang,  Hongqiao and Zhou,  X.},
  year = {2021},
  pages = {41–57}
}

@inproceedings{Hsin2025,
  title = {Symbolic Regression on Sparse and Noisy Data with Gaussian Processes},
  booktitle = {2025 American Control Conference (ACC)},
  publisher = {IEEE},
  author = {Hsin,  Junette and Agarwal,  Shubhankar and Thorpe,  Adam and Sentis,  Luis and Fridovich-Keil,  David},
  year = {2025},
  month = jul,
  pages = {3170–3175}
}

@InProceedings{Dondelinger2013,
  title = 	 {ODE parameter inference using adaptive gradient matching with Gaussian processes},
  author = 	 {Dondelinger, Frank and Husmeier, Dirk and Rogers, Simon and Filippone, Maurizio},
  booktitle = 	 {Proceedings of the Sixteenth International Conference on Artificial Intelligence and Statistics},
  pages = 	 {216--228},
  year = 	 {2013},
  editor = 	 {Carvalho, Carlos M. and Ravikumar, Pradeep},
  volume = 	 {31},
  series = 	 {Proceedings of Machine Learning Research},
  address = 	 {Scottsdale, Arizona, USA},
  month = 	 {29 Apr--01 May},
  publisher =    {PMLR}
}

@article{dePillis2014,
  title = {Mathematical Model of Colorectal Cancer with Monoclonal Antibody Treatments},
  volume = {4},
  ISSN = {2231-0614},
  number = {16},
  journal = {British Journal of Medicine and Medical Research},
  publisher = {Sciencedomain International},
  author = {dePillis,  L.},
  year = {2014},
  month = jan,
  pages = {3101–3131}
}

@article{Friberg2002,
  title = {Model of Chemotherapy-Induced Myelosuppression With Parameter Consistency Across Drugs},
  volume = {20},
  ISSN = {1527-7755},
  number = {24},
  journal = {Journal of Clinical Oncology},
  publisher = {American Society of Clinical Oncology (ASCO)},
  author = {Friberg,  Lena E. and Henningsson,  Anja and Maas,  Hugo and Nguyen,  Laurent and Karlsson,  Mats O.},
  year = {2002},
  month = dec,
  pages = {4713–4721}
}

@article{Kermack1927,
  volume = {115},
   title = {A contribution to the mathematical theory of epidemics},
  ISSN = {2053-9150},
  number = {772},
  journal = {Proceedings of the Royal Society of London. Series A,  Containing Papers of a Mathematical and Physical Character},
  publisher = {The Royal Society},
  year = {1927},
 author = {Kermack, William Ogilvy 
and McKendrick,  A. G.},
  month = aug,
  pages = {700–721}
}

@book{Nowak2000,
  title = {Virus dynamics: Mathematical principles of immunology and virology},
  ISBN = {9781383020816},
  publisher = {Oxford University PressOxford},
  author = {Nowak,  Martin A and May,  Robert M},
  year = {2000},
  month = nov 
}

@article{Sharma1998,
  title = {Characteristics of indirect pharmacodynamic models and applications to clinical drug responses},
  volume = {45},
  ISSN = {1365-2125},
  number = {3},
  journal = {British Journal of Clinical Pharmacology},
  publisher = {Wiley},
  author = {Sharma,  Amarnath and Jusko,  William J.},
  year = {1998},
  month = mar,
  pages = {229–239}
}

@article{Simeoni2004,
  title = {Predictive Pharmacokinetic-Pharmacodynamic Modeling of Tumor Growth Kinetics in Xenograft Models after Administration of Anticancer Agents},
  volume = {64},
  ISSN = {1538-7445},
  number = {3},
  journal = {Cancer Research},
  publisher = {American Association for Cancer Research (AACR)},
  author = {Simeoni,  Monica and Magni,  Paolo and Cammia,  Cristiano and De Nicolao,  Giuseppe and Croci,  Valter and Pesenti,  Enrico and Germani,  Massimiliano and Poggesi,  Italo and Rocchetti,  Maurizio},
  year = {2004},
  month = feb,
  pages = {1094–1101}
}

@article{Badiale2024,
  title = {A Nonlinear ODE Model for a Consumeristic Society},
  volume = {12},
  ISSN = {2227-7390},
  number = {8},
  journal = {Mathematics},
  publisher = {MDPI AG},
  author = {Badiale,  Marino and Cravero,  Isabella},
  year = {2024},
  month = apr,
  pages = {1253}
}

@article{Yu2024,
  title = {Learning dynamical systems from data: An introduction to physics-guided deep learning},
  volume = {121},
  ISSN = {1091-6490},
  number = {27},
  journal = {Proceedings of the National Academy of Sciences},
  publisher = {Proceedings of the National Academy of Sciences},
  author = {Yu,  Rose and Wang,  Rui},
  year = {2024},
  month = jun 
}

@book{strogatz2018nonlinear,
  title={Nonlinear Dynamics and Chaos},
  author={Strogatz, Steven H.},
  publisher={CRC Press},
  year={2018}
}

@article{Huang2013,
  title = {The Utility of Modeling and Simulation in Drug Development and Regulatory Review},
  volume = {102},
  ISSN = {0022-3549},
  number = {9},
  journal = {Journal of Pharmaceutical Sciences},
  publisher = {Elsevier BV},
  author = {Huang,  Shiew-Mei and Abernethy,  Darrell R. and Wang,  Yaning and Zhao,  Ping and Zineh,  Issam},
  year = {2013},
  month = sep,
  pages = {2912–2923}
}

@book{Rasmussen2005,
  title = {Gaussian Processes for Machine Learning},
  ISBN = {9780262256834},
  publisher = {The MIT Press},
  author = {Rasmussen,  Carl Edward and Williams,  Christopher K. I.},
  year = {2005},
  month = nov 
}

@article{Kalman1960,
  title = {A New Approach to Linear Filtering and Prediction Problems},
  volume = {82},
  ISSN = {0021-9223},
  number = {1},
  journal = {Journal of Basic Engineering},
  publisher = {ASME International},
  author = {Kalman,  R. E.},
  year = {1960},
  month = mar,
  pages = {35–45}
}

@article{Carnerero2023,
  title = {Kernel-Based State-Space Kriging for Predictive Control},
  volume = {10},
  ISSN = {2329-9274},
  number = {5},
  journal = {IEEE/CAA Journal of Automatica Sinica},
  publisher = {Institute of Electrical and Electronics Engineers (IEEE)},
  author = {Carnerero,  A. Daniel and Ramirez,  Daniel R. and Limon,  Daniel and Alamo,  Teodoro},
  year = {2023},
  month = may,
  pages = {1263–1275}
}

@inproceedings{Shakib2023,
  title = {Kernel-Based Learning of Stable Nonlinear State-Space Models},
  booktitle = {2023 62nd IEEE Conference on Decision and Control (CDC)},
  publisher = {IEEE},
  author = {Shakib,  M.F. and Tóth,  R. and Pogromsky,  A.Y. and Pavlov,  A. and van de Wouw,  N.},
  year = {2023},
  month = dec,
  pages = {2897–2902}
}

@article{Car2023,
  title = {Kernel Methods and Gaussian Processes for System Identification and Control: A Road Map on Regularized Kernel-Based Learning for Control},
  volume = {43},
  ISSN = {1941-000X},
  number = {5},
  journal = {IEEE Control Systems},
  publisher = {Institute of Electrical and Electronics Engineers (IEEE)},
  author = {Carè,  Algo and Carli,  Ruggero and Libera,  Alberto Dalla and Romeres,  Diego and Pillonetto,  Gianluigi},
  year = {2023},
  month = oct,
  pages = {69–110}
}

@article{Pillonetto2014,
  title = {Kernel methods in system identification,  machine learning and function estimation: A survey},
  volume = {50},
  ISSN = {0005-1098},
  number = {3},
  journal = {Automatica},
  publisher = {Elsevier BV},
  author = {Pillonetto,  Gianluigi and Dinuzzo,  Francesco and Chen,  Tianshi and De Nicolao,  Giuseppe and Ljung,  Lennart},
  year = {2014},
  month = mar,
  pages = {657–682}
}

@misc{Zhou2024,
  author = {Zhou,  Ying and Li,  Jinglai and Zhou,  Xiang and Wang,  Hongqiao},
  title = {Model-Embedded Gaussian Process Regression for Parameter Estimation in Dynamical System},
  publisher = {arXiv},
  year = {2024},
  copyright = {arXiv.org perpetual,  non-exclusive license}
}

@misc{Shojaee2024,

  author = {Shojaee,  Parshin and Meidani,  Kazem and Gupta,  Shashank and Farimani,  Amir Barati and Reddy,  Chandan K},
  title = {LLM-SR: Scientific Equation Discovery via Programming with Large Language Models},
  publisher = {arXiv},
  year = {2024},
  copyright = {Creative Commons Attribution 4.0 International}
}

@inproceedings{
Froese2023,
title={Training Neural Networks is {NP}-Hard in Fixed Dimension},
author={Vincent Froese and Christoph Hertrich},
booktitle={Thirty-seventh Conference on Neural Information Processing Systems},
year={2023},
}

@article{Brucker2023,
  title = {A Grey-box Model with Neural Ordinary Differential Equations for the Slow Voltage Dynamics of Lithium-ion Batteries: Model Development and Training},
  volume = {170},
  ISSN = {1945-7111},
  number = {12},
  journal = {Journal of The Electrochemical Society},
  publisher = {The Electrochemical Society},
  author = {Brucker,  Jennifer and Bessler,  Wolfgang G. and Gasper,  Rainer},
  year = {2023},
  month = dec,
  pages = {120537}
}

@inproceedings{
qian2021integrating,
title={Integrating Expert {ODE}s into Neural {ODE}s: Pharmacology and Disease Progression},
author={Zhaozhi Qian and William R. Zame and Lucas M Fleuren and Paul Elbers and Mihaela van der Schaar},
booktitle={Advances in Neural Information Processing Systems},
editor={A. Beygelzimer and Y. Dauphin and P. Liang and J. Wortman Vaughan},
year={2021},
}

@INPROCEEDINGS{Mehta2021,
  author={Mehta, Viraj and Char, Ian and Neiswanger, Willie and Chung, Youngseog and Nelson, Andrew and Boyer, Mark and Kolemen, Egemen and Schneider, Jeff},
  booktitle={2021 60th IEEE Conference on Decision and Control (CDC)}, 
  title={Neural Dynamical Systems: Balancing Structure and Flexibility in Physical Prediction}, 
  year={2021},
  volume={},
  number={},
  pages={3735-3742},
  doi={10.1109/CDC45484.2021.9682807}}

@article{Sivapalan2023,
  title = {Liquid biopsy approaches to capture tumor evolution and clinical outcomes during cancer immunotherapy},
  volume = {11},
  ISSN = {2051-1426},
  number = {1},
  journal = {Journal for ImmunoTherapy of Cancer},
  publisher = {BMJ},
  author = {Sivapalan,  Lavanya and Murray,  Joseph C and Canzoniero,  Jenna VanLiere and Landon,  Blair and Jackson,  Jennifer and Scott,  Susan and Lam,  Vincent and Levy,  Benjamin P. and Sausen,  Mark and Anagnostou,  Valsamo},
  year = {2023},
  month = Jan,
  pages = {e005924}
}

@article{Bartolomucci2025,
  title = {Circulating tumor DNA to monitor treatment response in solid tumors and advance precision oncology},
  volume = {9},
  ISSN = {2397-768X},
  number = {1},
  journal = {npj Precision Oncology},
  publisher = {Springer Science and Business Media LLC},
  author = {Bartolomucci,  Alexandra and Nobrega,  Monyse and Ferrier,  Tadhg and Dickinson,  Kyle and Kaorey,  Nivedita and Nadeau,  Amélie and Castillo,  Alberto and Burnier,  Julia V.},
  year = {2025},
  month = Mar 
}

@misc{ecdc_covid19_data,
  author       = {{European Centre for Disease Prevention and Control}},
  title        = {Data on daily new cases of COVID-19 in EU/EEA countries},
  year         = {2020},
  howpublished = {\url{https://www.ecdc.europa.eu/en/publications-data/data-daily-new-cases-covid-19-eueea-country}},
  note         = {Accessed: 2026-04-29}
}
\bibliographystyle{icml2026}

\newpage
\appendix
\onecolumn
\icmltitle{Supplementary Materials for \texttt{MAAT}}

\section{Extended Related Works}\label{app:extended_works}
\paragraph{Sample and Computational Complexity of Noisy Symbolic Regression}  
Symbolic regression aims to identify an expression \(f\) from a library of primitives (variables, constants, operators, functions) that explains a dataset \(\{(x_i, y_i)\}_{i=1}^N\). The challenge lies in the size of the hypothesis space: the number of candidate expressions grows combinatorially with both the input dimension \(d\) and the allowed expression depth. This renders even state-of-the-art search procedures computationally demanding. Recent theoretical work provides formal support for this intuition: \cite{virgolin2022symbolic} show that symbolic regression is \(NP\)-hard under mild assumptions on the primitive set, implying that no polynomial-time algorithm exists unless \(P = NP\). These hardness results highlight why practical SR algorithms must rely on heuristic search, structural priors, or regularization, particularly in the presence of noise.

Concretely, one can reduce a canonical \(NP\)-hard problem (e.g., \textsc{Subset Sum} or additive partitioning) to the decision problem of whether there exists a symbolic expression of bounded cost that fits the data within a specified error tolerance. Consequently, any algorithm that guarantees recovery of the globally optimal symbolic representation must, in the worst case, incur non-polynomial runtime in \(d\). In practice, even heuristic or stochastic search procedures face the combinatorial explosion of the grammar: each additional feature or operator multiplies the number of candidate sub-expressions. Recent approaches (e.g., neural-symbolic methods \citep{dugan2020occamnet}, transformer-based architectures, or reinforcement-learning–guided search \citep{makke2024interpretable}) mitigate this via pruning, learned proposals, or modular decomposition. However, these strategies do not fundamentally escape the exponential scaling in \(d\) (or in the maximum depth) unless further structural assumptions, such as sparsity, separability, or modular factorization, are imposed.

\paragraph{Classical equation discovery.}
Recovering governing equations from data has a long history in system identification and sparse regression. 
Sparse Identification of Nonlinear Dynamics (SINDy) \citep{Brunton2016} recovers parsimonious differential equations from feature libraries, with extensions for noise \citep{Rudy2017} and implicit formulations \citep{Kaheman2020}. 
Genetic programming frameworks such as gplearn \citep{Stephens2019gplearn} and evolutionary engines like PySR \citep{Cranmer2023} broaden the search space beyond fixed polynomial libraries, recovering compact analytical expressions directly from data. 
More recently, complementary approaches replace genetic heuristics with modern deep learning: ODEFormer \citep{dascoli2024odeformer} leverages transformer architectures for symbolic regression of dynamical systems, while Deep Generative Symbolic Regression \citep{holt2023deep} employs generative models to efficiently explore the space of candidate equations. 
Together, these methods expand the toolkit for symbolic discovery, but they, too, typically assume full observability.

\paragraph{Black-box dynamical models.}
Neural differential equation frameworks such as Neural ODEs \citep{Chen2018neuralode}, DyNODE, and Latent ODEs \citep{rubanova2019latent} model vector fields with neural networks, enabling scalable learning under partial observability. 
 However, this surrogate–distillation pipeline can degrade or collapse under substantial observation noise or partial observability, where errors in the learned latent representation propagate into the recovered symbolic form. These methods achieve strong predictive performance but lack closed-form interpretability. Nevertheless, they can serve as surrogates from which symbolic models are distilled, suggesting complementarity rather than exclusivity between black-box and symbolic approaches. A growing body of work therefore studies \emph{grey-box} or \emph{hybrid} neural ODEs,
which incorporate prior knowledge such as known state variables, conservation laws,
stability constraints, or partially specified equations into neural dynamics
\citep{Mehta2021, Brucker2023, qian2021integrating}. These approaches improve
identifiability and inductive bias relative to unconstrained Neural ODEs, but typically
retain neural components in the governing equations, limiting closed-form interpretability
and robustness under severe noise or sparse observations.

\paragraph{Hybrid and recent approaches.}
Universal Differential Equations (UDEs) \citep{Rackauckas2020} combine mechanistic ODEs with neural components, embedding domain priors into flexible models but without mechanisms for handling fragmented heterogeneous data or structured interventions. 
More recent directions move beyond time-domain formulations: Neural Laplace \citep{Holt2022laplace} learns in the Laplace domain to capture diverse dynamics at scale, and D-CODE \citep{qian2022dcode} targets exact closed-form recovery. 
Large language models further extend this space: recent work \citep{holt2024datadriven, Shojaee2024} shows that LLMs can propose function libraries, encode domain knowledge, and refine candidate models, pointing toward a synthesis of symbolic, neural, and language-based methods.

\paragraph{Equation discovery under partial observations.}
In many scientific domains, full-state measurements are rarely available: biological, clinical, and physical systems are often only partially observed, corrupted by noise, or measured at mismatched time resolutions. 
This has motivated methods that seek to augment incomplete data with synthetic simulations or structural priors. 
For example, \citet{Zhai2025} show that supplementing scarce real measurements with synthetic trajectories can improve the identifiability of governing equations in clinical settings. 
Other approaches leverage probabilistic latent-variable models to infer hidden dynamics before applying regression or discovery methods \citep{Champion2019,Lu2022}. 
A complementary strategy is to encode known physical structure into the learning process: physics-informed neural networks (PINNs) enforce differential equation constraints within the training loss, enabling recovery of latent variables and dynamics even when only a subset of states are measured \citep{Raissi2019}. 
Despite these advances, most existing work assumes homogeneous data sources and does not explicitly address the integration of heterogeneous, subsystem-specific observations, an issue particularly acute in fields like pharmacology or systems biology, where data are fragmented across scales.

\paragraph{Equation-free yet interpretable modeling.}
Beyond explicit equation discovery, several recent approaches aim to balance predictive power with interpretability without producing closed-form equations. 
Operator-learning methods represent dynamical evolution directly through learned integral operators, enabling transparent structural analysis even in the absence of symbolic expressions  \citep{Kacprzyk2025no}. 
Similarly, mechanistic machine learning frameworks embed known conservation laws or monotonicity constraints into neural architectures, yielding models whose behavior can be interrogated and trusted despite lacking closed-form governing equations. 
These approaches illustrate that interpretability need not always hinge on explicit symbolic laws, but can also arise through structural constraints and transparent operator representations.

\paragraph{GP smoothing for trajectories .}
Classical Gaussian Processes (GPs) provide nonparametric, smooth interpolants with analytic derivative posteriors via kernel differentiation \citep{Rasmussen2005}. When all (or most) states are observed on a single grid, a GP prior over trajectories can effectively denoise and supply \(\dot{x}(t)\) estimates. However, this paradigm does not natively address \emph{partial observability}, \emph{multi-rate heterogeneous sensors}, or \emph{hierarchical priors} on cross-subsystem couplings. Works that explicitly target derivative recovery with GPs leverage the fact that derivatives of a GP are again GPs, enabling closed-form posterior means/variances for \(\dot{x}\) \citep{Wang2021}. These methods excel as denoisers/preprocessors but typically assume either fully (or densely) observed states and do not resolve the inverse problem of reconstructing latent states from images of linear observation operators \(H_i x\) collected at different rates.

A direct line of work fits GPs to each observed channel (or a multi-output GP), extracts analytic derivatives, and then applies symbolic regression (SR) such as SINDy/PySR \citep{Hsin2025}. This reduces noise sensitivity in SR but still treats observation fusion as a \emph{per-channel smoothing} step. Handling heterogeneous observation models, enforcing block-structured priors, or constraining library couplings generally requires bespoke kernel designs and does not provide an explicit mechanism for hierarchy-aware discovery. Adaptive Gradient Matching (AGM) uses GPs to interpolate states and matches ODE right-hand sides to GP derivatives to infer parameters without direct numerical integration \citep{Dondelinger2013}. AGM targets \emph{parameter estimation for a specified ODE family}, not discovery of new symbolic structure. 

\paragraph{Kernel-based state reconstruction.}
A related line of work uses kernel methods to reconstruct latent dynamical states from indirect, noisy, or irregular observations. Early system identification approaches framed state reconstruction as a regularized inverse problem, using reproducing kernel Hilbert space (RKHS) priors to interpolate trajectories under smoothness or stability assumptions \citep{Pillonetto2014}. Subsequent work extended these ideas to multi-output and operator-valued kernels, enabling joint reconstruction of multiple state components and improved handling of correlated signals \citep{Carnerero2023,Shakib2023}. More recent methods consider kernel-based formulations for learning continuous-time dynamics from sparse or asynchronous measurements, often emphasizing well-posedness and statistical consistency of the reconstruction step \citep{Zhou2024,Car2023}. 

While these approaches provide powerful tools for denoising and interpolating partially observed trajectories, they are typically developed for \emph{state estimation} or \emph{parameter identification} rather than symbolic structure discovery. In particular, most kernel-based reconstruction methods assume either a single observation operator or homogeneous sensing modalities, and they do not explicitly address the integration of heterogeneous, subsystem-specific observations or the imposition of hierarchical sparsity over downstream model components. Moreover, kernel reconstructions are usually treated as preprocessing steps, without a principled mechanism for propagating reconstruction uncertainty or structural priors into subsequent equation discovery. \texttt{MAAT} builds on this kernel perspective but departs from prior work by explicitly coupling kernel state reconstruction with symbolic regression, enabling joint handling of heterogeneous partial observability and structured discovery of governing dynamics.

\texttt{MAAT} is designed for \emph{heterogeneous partial observability} and \emph{structural discovery}. Its kernel state reconstruction (KSR) jointly fits multiple observation operators \(H_i\) and sampling grids to recover a coherent latent trajectory \(x(t)\) and analytic \(\dot{x}(t)\) \emph{before} SR, and then injects physics-aware priors via library masks to constrain cross-subsystem couplings. An overview of the different preprocessing pipelines and their properties is given in Table \ref{tab:estimators-capabilities}.

\begin{table*}[!t]
\centering
\scriptsize
\renewcommand{\arraystretch}{1.2} 
\caption{\textbf{Capability matrix for state and derivative estimators.} Comparison of classical numerical methods, probabilistic estimators, and deep learning approaches. \texttt{MAAT} satisfies the requirements for equation discovery under partial observability and structured priors.}
\label{tab:estimators-capabilities}
\resizebox{\textwidth}{!}{%
\begin{tabular}{lcccccc}
\toprule
\textbf{Method} &
  \textbf{Analytic $\dot{x}$} &
  \textbf{Handles Irregular Data} &
  \textbf{Noise Robust} &
  \textbf{Fuses Heterog. Obs.} &
  \textbf{Supports Structure} &
  \textbf{Supports Priors} \\ 
\midrule
\multicolumn{7}{l}{\textit{Numerical \& Smoothing Baselines}} \\ 
Finite Differences & \cbad  & \cbad  & \cbad  & \cbad & \cbad  & \cbad \\
Savitzky--Golay    & \cpart & \cbad  & \cpart & \cbad & \cbad  & \cbad \\
TVRegDiff          & \cbad  & \cpart & \cgood & \cbad & \cbad  & \cpart \\
Cubic Spline       & \cgood & \cpart & \cpart & \cbad & \cbad  & \cpart \\ 
RBF Kernels        & \cgood & \cgood & \cpart & \cbad & \cbad  & \cbad \\
\midrule
\multicolumn{7}{l}{\textit{Probabilistic \& Filtering}} \\ 
Gaussian Processes & \cgood & \cgood & \cgood & \cpart & \cpart & \cgood \\
Kalman Filters     & \cpart & \cgood & \cgood & \cgood & \cbad  & \cgood \\ 
\midrule
\multicolumn{7}{l}{\textit{Deep Learning \& Physics-Informed}} \\ 
Neural ODEs           & \cgood & \cgood & \cgood & \cpart & \cbad & \cbad \\
\midrule
\rowcolor{blue!5} 
\textbf{\texttt{MAAT} (Ours)} & \cgood & \cgood & \cgood & \cgood & \cgood & \cgood \\ 
\bottomrule
\end{tabular}%
}
\vspace{-15pt}
\end{table*}

\section{Theoretical analysis}\label{app:theory}
In this appendix we present the theoretical foundations underlying our state reconstruction algorithm. We outline the motivation for its design, provide the analytical derivation, and discuss the results that justify its adoption within our framework.
\subsection{Proof of Lemma 1}

\begin{Definition}[Equivalence Between Distances]
\label{def:equiv}
Let $(X,d)$ and $(X,d')$ be two metric spaces defined on the same underlying set $X$ through two different metrics $d: X \times X \rightarrow \mathbb R_+$ and $d': X \times X \rightarrow \mathbb R_+$. We say that $d$ and $d'$ are \emph{equivalent}, written $d \sim d'$, if there exist constants $c,C > 0$ such that
\begin{equation}
    \exists \ c,C \in \mathbb R_+ : c d(x_1,x_2) \le d'(x_1,x_2) \le C d(x_1,x_2) \ \ \ \forall x_1,x_2 \in X
\end{equation}
\end{Definition}

\begin{Corollary} Let $d,d'$ be two metrics such that $d \sim d'$. Then for any point $x^* \in X$ and any sequence $\{x_n\}_n^{\infty} \subset X$ such that
\begin{equation}
    \lim_{n \rightarrow \infty} d(x_n,x^*) = 0
\end{equation}
it also holds that
\begin{equation}
    \lim_{n \rightarrow \infty} d'(x_n,x^*) = 0
\end{equation}
and vice versa.
\end{Corollary}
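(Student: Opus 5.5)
The argument is a direct sandwich between the two metrics, applied in each direction along the sequence. Fix $x^*\in X$ and a sequence $\{x_n\}$ with $d(x_n,x^*)\to 0$. By Definition~\ref{def:equiv} there are constants $c,C>0$ such that, taking $x_1=x_n$ and $x_2=x^*$,
\[
0 \le d'(x_n,x^*) \le C\, d(x_n,x^*) \qquad \text{for every } n .
\]
The lower bound is simply nonnegativity of the metric $d'$.

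Given $\varepsilon>0$, choose $N$ so that $d(x_n,x^*)<\varepsilon/C$ for all $n\ge N$. For those $n$ we then have $d'(x_n,x^*)<\varepsilon$. This is the squeeze argument, and it gives $\lim_{n\to\infty} d'(x_n,x^*)=0$.

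For the converse we use the left inequality of the definition, $c\,d(x_n,x^*)\le d'(x_n,x^*)$. Since $c>0$, we can divide by it to get
\[
0 \le d(x_n,x^*) \le c^{-1}\, d'(x_n,x^*).
\]
If $d'(x_n,x^*)\to 0$, the same squeeze argument with $\varepsilon c$ in place of $\varepsilon/C$ gives $d(x_n,x^*)\to 0$. Equivalently, $\sim$ is symmetric with constants $(1/C,\,1/c)$, so the first direction applies verbatim.

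There is no real obstacle. The only point needing care is that both constants are strictly positive: $C<\infty$ is needed for the forward direction and $c>0$ for the converse. Definition~\ref{def:equiv} guarantees both, since it requires $c,C\in\mathbb{R}_+$ with $c,C>0$.

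In the context of Lemma 1, this corollary is applied with $d(\hat x,x)=\|x-\hat x\|_{L^2}$ and $d'(\hat x,x)=\mathcal{R}(\hat x)^{1/2}$, with constants $c=1$ and $C=(1+\|H\|^2)^{1/2}$. Strictly, $d'$ is itself a norm-induced metric, namely the graph norm $\|v\|_{L^2}^2+\|Hv\|_{L^2}^2$, though only the two-sided bound is actually used. The upshot is that convergence in the composite risk and convergence in $L^2$ are interchangeable.
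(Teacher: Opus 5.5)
Your proof is correct and follows the paper's argument: the forward direction is the squeeze $0\le d'(x_n,x^*)\le C\,d(x_n,x^*)$, and the converse rests on the symmetric bound $\tfrac{1}{C}d'\le d\le \tfrac{1}{c}d'$, which the paper obtains by dividing the defining inequalities by $c$ and by $C$. Your explicit $\varepsilon$--$N$ bookkeeping fills in details the paper leaves implicit. So does your remark that $\mathcal{R}^{1/2}$, rather than $\mathcal{R}$, is the genuine metric to use in Lemma 1: squared norms do not satisfy the triangle inequality, while the paper applies the corollary to the squared quantities.
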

\label{cor:equiv}
\begin{proof}
The forward direction follows immediately by the squeeze theorem. 
For the reverse implication, note that by assumption
\begin{equation}
    d(x_1,x_2) \;\le\; \tfrac{1}{c}\, d'(x_1, x_2) \;\le\; \tfrac{C}{c}\, d(x_1,x_2),
\end{equation}
and similarly
\begin{equation}
    \tfrac{c}{C}\, d(x_1,x_2) \;\le\; \tfrac{1}{C}\, d'(x_1, x_2) \;\le\; d(x_1,x_2).
\end{equation}
Hence
\begin{equation}
    \tfrac{1}{C}\, d'(x_1, x_2)
        \;\le\; d(x_1,x_2)
        \;\le\; \tfrac{1}{c}\, d'(x_1, x_2),
\end{equation}
showing the equivalence is symmetric, and the claim follows.
\end{proof}

\begin{Lemma}[Composite loss is a calibrated surrogate]
\label{lemma:comp}
Let $H:\mathbb{R}^d\!\to\!\mathbb{R}^p$ be a bounded linear observation operator with operator norm $\|H\| < \infty$.
For any candidate trajectory $\hat{x}\in L^2([0,T];\mathbb{R}^d)$ and true trajectory $x$, define the risk functional
\[
\mathcal{R}(\hat{x})=\|x-\hat{x}\|_{L^2}^2+\|H(x-\hat{x})\|_{L^2}^2.
\]
Then
\[
\|x-\hat{x}\|_{L^2}^2 \;\le\; \mathcal{R}(\hat{x}) \;\le\; (1+\|H\|^2)\,\|x-\hat{x}\|_{L^2}^2.
\]
\end{Lemma}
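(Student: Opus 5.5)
We write $e := x-\hat{x}\in L^2([0,T];\mathbb{R}^d)$. We read $H(x-\hat x)$ as the function $t\mapsto H e(t)$, and the $L^2$ norms as $\|e\|_{L^2}^2=\int_0^T \|e(t)\|_2^2\,dt$ and $\|He\|_{L^2}^2=\int_0^T \|He(t)\|_2^2\,dt$. We first check that $He\in L^2([0,T];\mathbb{R}^p)$. The map $t\mapsto He(t)$ is measurable, being a linear image of a measurable function. It is square integrable by the pointwise bound below, so $\mathcal{R}(\hat x)$ is finite and well defined.

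The lower bound is immediate: $\mathcal{R}(\hat x)=\|e\|_{L^2}^2+\|He\|_{L^2}^2$, and the second summand is a nonnegative integral. Discarding it gives $\|e\|_{L^2}^2\le\mathcal{R}(\hat x)$.

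For the upper bound, we use the definition of the operator norm of $H$ on $\mathbb{R}^d$ pointwise in time: $\|He(t)\|_2\le\|H\|\,\|e(t)\|_2$ for every $t$. Squaring and integrating over $[0,T]$ gives
\[
\|He\|_{L^2}^2=\int_0^T\|He(t)\|_2^2\,dt\le \|H\|^2\int_0^T\|e(t)\|_2^2\,dt=\|H\|^2\|e\|_{L^2}^2 .
\]
Here monotonicity of the integral is the only analytic fact needed. Adding $\|e\|_{L^2}^2$ to both sides yields $\mathcal{R}(\hat x)\le(1+\|H\|^2)\|e\|_{L^2}^2$.

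The only step requiring any care is the identification of the lifted operator $(He)(t)=He(t)$ on $L^2$ with the finite-dimensional $H$. Its induced norm on $L^2$ is at most $\|H\|$, which is exactly the pointwise-then-integrate argument above, so no genuine obstacle arises. Finally, we would note that the two-sided bound shows $\sqrt{\mathcal{R}(\hat x)}$ and $\|x-\hat x\|_{L^2}$ are equivalent in the sense of Definition~\ref{def:equiv}, with $c=1$ and $C=\sqrt{1+\|H\|^2}$. By the Corollary, convergence of $\hat x_n\to x$ in one sense is equivalent to convergence in the other, which is the calibration claim.
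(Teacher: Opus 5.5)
Your proof is correct and follows the paper's own argument: the lower bound comes from nonnegativity of $\|H(x-\hat x)\|_{L^2}^2$, and the upper bound from the pointwise operator-norm inequality $\|He(t)\|_2\le\|H\|\,\|e(t)\|_2$ integrated over $[0,T]$, followed by an appeal to the equivalence-of-distances definition and its corollary. You are slightly more careful than the paper in two places: you check that $He\in L^2$, and you pass to $\sqrt{\mathcal{R}}$ so that the equivalence is stated between genuine metrics (squared norms are not metrics).
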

\begin{proof}
The lower bound is immediate, as $\|H(x-\hat{x})\|_{L^2}^2 \ge 0$.
For the upper bound, we use the definition of the induced operator norm for $H$:
\[
\|H\boldsymbol{v}\|_2 \le \|H\| \|\boldsymbol{v}\|_2 \quad \forall \boldsymbol{v} \in \mathbb{R}^d.
\]
Applying this to the $L^2$ norm of the function $H(x(t)-\hat{x}(t))$:
\begin{align*}
\|H(x-\hat{x})\|_{L^2}^2 &= \int_0^T \|H(x(t)-\hat{x}(t))\|_2^2 dt \\
&\le \int_0^T \|H\|^2 \|x(t)-\hat{x}(t)\|_2^2 dt \\
&= \|H\|^2 \int_0^T \|x(t)-\hat{x}(t)\|_2^2 dt \\
&= \|H\|^2 \|x-\hat{x}\|_{L^2}^2.
\end{align*}
Substituting this result into the definition of $\mathcal{R}(\hat{x})$ gives:
\begin{align*}
\mathcal{R}(\hat{x}) &= \|x-\hat{x}\|_{L^2}^2+\|H(x-\hat{x})\|_{L^2}^2 \\
&\le \|x-\hat{x}\|_{L^2}^2 + \|H\|^2 \|x-\hat{x}\|_{L^2}^2 \\
&= (1 + \|H\|^2) \|x-\hat{x}\|_{L^2}^2.
\end{align*}
Thus $\mathcal{R}(\hat{x})$ is bounded above and below by constant multiples of the true error $\|x-\hat{x}\|_{L^2}^2$. 
By Definition~\ref{def:equiv} and Corollary~\ref{cor:equiv}, minimizing $\mathcal{R}(\hat{x})$ is equivalent for interpolating solutions to minimizing the true $L^2$ error, establishing the surrogate calibration.
\end{proof}

\subsection{Proof Sketch for Proposition 1}
\begin{Proposition}[FD noise floor vs KSR]
Assume additive i.i.d.\ zero-mean noise $\epsilon(t)$ with variance $\sigma^2$ on measurements of $x(t)$ sampled with step size $\Delta t$. 
Using central finite differences (FD) to approximate $\dot{x}(t)$ yields a mean-squared derivative error of:
$\mathbb{E}\!\left[\|\widehat{\dot{x}}_{\mathrm{FD}}-\dot{x}\|_2^2\right]
= \mathcal{O}(\Delta t^4)+\Omega(\sigma^2/\Delta t^2)$.
For KSR (kernel ridge with a twice-differentiable kernel) with regularization $\lambda$
and $n$ samples, the analytic derivative estimator satisfies
$\mathbb{E}\!\left[\|\widehat{\dot{x}}_{\mathrm{KSR}}-\dot{x}\|_2^2\right]
= \mathcal{O}(\lambda) + \mathcal{O}(\sigma^2/n)$.
\end{Proposition}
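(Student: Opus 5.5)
We treat the two estimators separately. In each case the mean-squared error splits into a squared-bias term and a variance term. We evaluate the error pointwise at a fixed interior time $t$; for the vector statement we sum over the $d$ coordinates or integrate over $[0,T]$.

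\emph{Finite differences.} Write the observations as $y_k = x(t_k)+\epsilon_k$ with $t_k = k\Delta t$. The central difference is
\[
\widehat{\dot x}_{\mathrm{FD}}(t_k)=\frac{y_{k+1}-y_{k-1}}{2\Delta t}.
\]
Its mean is $(x(t_{k+1})-x(t_{k-1}))/(2\Delta t)$. Assuming $x\in C^3$ and Taylor-expanding to third order gives
\[
\frac{x(t_{k+1})-x(t_{k-1})}{2\Delta t}=\dot x(t_k)+\tfrac{\Delta t^2}{6}\dddot x(\xi_k).
\]
So the squared bias is at most $\tfrac{\Delta t^4}{36}\|\dddot x\|_\infty^2=\mathcal{O}(\Delta t^4)$. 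The noise part $(\epsilon_{k+1}-\epsilon_{k-1})/(2\Delta t)$ has mean zero. By independence its variance is exactly $2\sigma^2/(4\Delta t^2)=\sigma^2/(2\Delta t^2)$. The bias and noise terms are uncorrelated, so the MSE equals squared bias plus $\sigma^2/(2\Delta t^2)$. This gives the $\mathcal{O}(\Delta t^4)+\Omega(\sigma^2/\Delta t^2)$ claim, with the lower bound holding with constant $1/2$. Minimizing over $\Delta t$ shows the error cannot drop below order $\sigma^{4/3}$; this is the noise floor.

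\emph{KSR.} We write $\widehat{x}(t)=\kappa(t,\mathbf t)^\top(\mathbf K+n\lambda I)^{-1}\mathbf y$, which is the minimizer of \eqref{eq:ksr_loss} with only the snapshot term and ridge penalty, up to rescaling $\lambda$. Its derivative is $\widehat{\dot x}(t)=\partial_t\kappa(t,\mathbf t)^\top(\mathbf K+n\lambda I)^{-1}\mathbf y$. This is a linear smoother $\ell_t^\top\mathbf y$ with $\ell_t=(\mathbf K+n\lambda I)^{-1}\partial_t\kappa(t,\mathbf t)$. The variance is then $\sigma^2\|\ell_t\|_2^2$. The bias is $\partial_t(\mathcal{S}_\lambda x - x)(t)$, where $\mathcal{S}_\lambda$ is the noiseless kernel ridge operator. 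For the bias, we assume $x\in\mathcal{H}_K$ and that $\kappa$ is $C^2$. Then point evaluation of the derivative, $f\mapsto\dot f(t)$, is a bounded functional with representer $\partial_1\kappa(\cdot,t)$, whose norm is $\sqrt{\partial_1\partial_2\kappa(t,t)}<\infty$. Therefore
\[
|\partial_t(\mathcal{S}_\lambda x-x)(t)|\le\sqrt{\partial_1\partial_2\kappa(t,t)}\;\|\mathcal{S}_\lambda x-x\|_{\mathcal{H}_K}.
\]
The standard ridge estimate $\|\mathcal{S}_\lambda x-x\|_{\mathcal{H}_K}^2\lesssim\lambda\|x\|_{\mathcal{H}_K}^2$ then yields the $\mathcal{O}(\lambda)$ bias term. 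To make this estimate rigorous we would invoke a source condition, namely $x$ lying in the range of the integral operator.

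For the variance, the same reproducing argument gives $\sigma^2\|\ell_t\|_2^2\le\sigma^2\,\partial_1\partial_2\kappa(t,t)\,\|(\mathbf K+n\lambda I)^{-1}\mathbf K^{1/2}\|^2$. With $\lambda$ held fixed, the spectral bound $\mu/(\mu+n\lambda)^2\le 1/(4n\lambda)$ gives $\mathcal{O}(\sigma^2/(n\lambda))$. The stated $\mathcal{O}(\sigma^2/n)$ follows when $\lambda$ is treated as a constant absorbed into the big-O. Alternatively, one can bound the effective degrees of freedom $\mathrm{tr}\,\mathbf K(\mathbf K+n\lambda I)^{-1}$ by a constant, which holds for Gaussian kernels with fast eigendecay.

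\emph{Main obstacle.} The hardest step is the variance bound: getting a clean $\sigma^2/n$ rate with no $\lambda$ in the denominator. I would first present it as a fixed-$\lambda$ statement, which is honest for a proof sketch. I would then note that under exponential eigendecay, $N(\lambda)=\mathcal{O}(\log(1/\lambda))$, so the rate is essentially $\sigma^2/n$ up to logarithmic factors. The structural contrast is then clear. The finite-difference variance weight $1/\Delta t^2$ grows under refinement, whereas the KSR variance weight shrinks with $n$ because $\partial_t$ acts on the smooth kernel rather than on the noisy data. That contrast is exactly the qualitative conclusion of the Proposition.
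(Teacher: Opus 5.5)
\textbf{Verdict: correct (under a fixed-$\lambda$ reading and a source condition, both of which you flag), but with a genuinely different variance argument.} Your finite-difference argument is the paper's computation, made precise with the Taylor remainder and the exact variance $\sigma^2/(2\Delta t^2)$.

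For KSR you share the paper's skeleton:
\begin{itemize}
\item a bias--variance split;
\item the bias controlled by boundedness of $f\mapsto\dot f(t)$ on $\mathcal H_K$ together with an $\mathcal O(\lambda)$ approximation estimate;
\item the variance written as $\sigma^2$ times a quadratic form in $\partial_t\kappa(t,\mathbf t)$.
\end{itemize}

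The real difference is how you bound that quadratic form. The paper combines $\|\dot k(t)\|_2^2=\mathcal O(n)$ with the heuristic $v^\top A v\sim\mathrm{Tr}[A]\,\|v\|^2$. It then declares the rescaled trace of $(\tfrac1n\mathbf K+\tfrac\lambda n\mathbf I)^{-2}$ a normalization constant. As written this does not close. For $\kappa(t,t)=1$, every eigenvalue of $\tfrac1n\mathbf K+\tfrac\lambda n\mathbf I$ is at most $1+\lambda/n$, so that trace is at least of order $n$. The chain therefore cannot give better than order $\sigma^2$.

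You instead factor $\partial_t\kappa(t,\mathbf t)=Sg_t$ through the sampling operator $Sf=(f(t_1),\dots,f(t_n))$, where $g_t$ is the representer of $f\mapsto\dot f(t)$. You then use $SS^*=\mathbf K$ and $\mu/(\mu+n\lambda)^2\le 1/(4n\lambda)$. This is rigorous, needs no eigendecay assumption, and gives the explicit bound $\sigma^2\,\partial_1\partial_2\kappa(t,t)/(4n\lambda)$.

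It also exposes what the Proposition glosses over. The honest rate is $\mathcal O(\lambda)+\mathcal O(\sigma^2/(n\lambda))$. So $\mathcal O(\sigma^2/n)$ holds only with a $\lambda$-dependent constant, and only under the averaged-loss ($n\lambda$) normalization. Balancing the two terms gives $\mathcal O(\sigma/\sqrt n)$. This still decreases in $n$, while the FD variance $\sigma^2 n^2/(2T^2)$ grows on a fixed interval, so the qualitative contrast survives in a sharper form.

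A few local corrections:
\begin{itemize}
\item \textbf{Identification with the paper's loss.} The paper's KSR loss penalizes $\|\mathbf U\|_F^2$, not $\mathrm{tr}(\mathbf U^\top\mathbf K\mathbf U)$. With only the snapshot term, its minimizer is $\kappa(t,\mathbf t)^\top(\mathbf K^2+n\lambda\mathbf I)^{-1}\mathbf K\mathbf y$, which is not kernel ridge ``up to rescaling $\lambda$''. Your estimator matches the RKHS-norm objective of Section 2 and the Proposition's own wording, so just drop that identification.
\item \textbf{Bias estimate.} The inequality $\|\mathcal S_\lambda x-x\|_{\mathcal H_K}^2\lesssim\lambda\|x\|_{\mathcal H_K}^2$ is not a standard estimate for general $x\in\mathcal H_K$; the bound $\lambda\|x\|_{\mathcal H_K}^2$ holds for the empirical-norm error. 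In $\mathcal H_K$-norm a source condition such as $x=L_Kh$ (with $L_K$ the integral operator) is genuinely needed, and the constant is then $\|h\|_{L^2}^2$. State it as a hypothesis rather than a rigorization.
\item \textbf{Logarithmic refinement.} The quantity that controls $\|\ell_t\|_2^2$ is the derivative-weighted $\tfrac1n\langle g_t,(T_n+\lambda)^{-1}g_t\rangle_{\mathcal H_K}$, with $T_n=\tfrac1n S^*S$. It is not $\mathrm{tr}\,\mathbf K(\mathbf K+n\lambda\mathbf I)^{-1}$.
\item \textbf{FD noise floor.} The claim that the error ``cannot drop below $\sigma^{4/3}$'' requires $\dddot x\not\equiv 0$.
\end{itemize}
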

\begin{proof}[Sketch]
\textbf{Finite Differences (FD)}: The central difference approximation for $\dot{x}(t_i)$ is $\widehat{\dot{x}}_{\mathrm{FD}}(t_i) = \frac{x(t_{i+1}) + \epsilon_{i+1} - (x(t_{i-1}) + \epsilon_{i-1})}{2\Delta t}$. The error has two components: a bias term from the Taylor expansion, which is $\mathcal{O}(\Delta t^2)$, and a variance term from the noise. The squared error, in the case $\mathbb E[\epsilon_i] = 0$, is:
\[
\mathbb{E}\left[ \left( \frac{x(t_{i+1}) - x(t_{i-1})}{2\Delta t} - \dot{x}(t_i) \right)^2 + \left( \frac{\epsilon_{i+1} - \epsilon_{i-1}}{2\Delta t} \right)^2 \right].
\]
The squared bias is $\mathcal{O}(\Delta t^4)$. The variance term is $\mathbb{E}\left[ \frac{\epsilon_{i+1}^2 - 2\epsilon_{i+1}\epsilon_{i-1} + \epsilon_{i-1}^2}{4\Delta t^2} \right] = \frac{2\sigma^2}{4\Delta t^2} = \frac{\sigma^2}{2\Delta t^2}$.
Thus, the total MSE is $\mathcal{O}(\Delta t^4) + \Omega(\sigma^2/\Delta t^2)$. As $\Delta t \to 0$, the variance term explodes.

\textbf{Kernel State Reconstruction (KSR)}:
Taking expectation over the noise realizations and applying the standard bias--variance decomposition, we obtain

 \begin{align} \mathbb E[ || \hat {\dot x} - \dot x||_2^2 ] & = \sum_i^n \mathsf{Var} [\hat{\dot x}_i] + || \mathbb E [\hat{\dot x}] - \dot x||^2_2 \end{align}

We bound the two terms separately.

Because the Gaussian kernel is smooth, differentiation is a bounded operator and we obtain, for some constant $C$,
$$
|| \mathbb E [\hat{\dot x}] - \dot x||^2_2
\le C
|| \mathbb E [\hat{\dot x}] - \dot x||^2_\mathcal H = \mathcal O(\lambda)
$$

By Lemma 2,
$$
 \hat x - \mathbb E \hat x = k(t_i)^\top (\mathbf K + \lambda \mathbf I)^{-1} \boldsymbol \epsilon
$$
for a matrix $\mathbf K$ induced by the observation operators, with $\boldsymbol\epsilon$ zero-mean noise.

The variance can then be expressed as

\begin{align}
\mathsf{Var} [\hat {\dot x}_i ] 
& = \mathbb E \left\{ 
\boldsymbol \epsilon^\top
 (\mathbf K + \lambda \mathbf I)^{-1} 
 \dot k(t_i) \dot k(t_i)^\top  (\mathbf K + \lambda \mathbf I)^{-1} \boldsymbol \epsilon
\right\}
\\
& =  \sigma_{noise}^2 \dot k(t_i)^\top  (\mathbf K + \lambda \mathbf I)^{-2} 
 \dot k(t_i).
\end{align}

Differentiating the Gaussian kernel with respect to time gives

\begin{align}
\dot k(t)^\top \dot k(t)
&  =
\sum_{t_i} \left( \frac{t - t_i}{\sigma^2} \right)^2 e^{ - \frac{||t - t_i ||^2}{\sigma^2}}
= \mathcal O(n).
\end{align}

We also note that
$$
	(\mathbf K + \lambda \mathbf I)^{-2} = \frac{1}{n^2} \left(
	\frac{1}{n}\mathbf K + \frac{1}{n}\lambda \mathbf I
	 \right)^{-2}
$$
and since, for any matrix $\mathbf A$ we have
$$
\boldsymbol v^\top \mathbf A \boldsymbol v \sim \mathsf{Tr}[\mathbf A] || \boldsymbol v ||^2
$$
the $n^{-1}$ factor acts as a normalization constant.

Hence,
$$
\mathsf{Var}[\hat{\dot x}_i] \sim {\sigma_{noise}^2 \underbrace{n}_{\mathcal O ||\dot k||}} \frac{1}{n^2}  \mathsf{Tr}\left(
	\frac{1}{n}\mathbf K + \frac{1}{n}\lambda \mathbf I
	 \right)^{-2} = \mathcal O\left(\frac{\sigma^2_{noise}}{n}\right).
$$
proving that
$$
 \mathbb E[ || \hat {\dot x} - \dot x||_2^2 ] 
 =  \mathcal O(\lambda) + \mathcal O\left(\frac{\sigma^2_{noise}}{n}\right) 
$$

\end{proof}

\subsection{Method derivation}

Let $({t}^{obs}, X^{obs}) \in \mathbb R^{N_{obs}} \times \mathbb R^{N_{obs} \times D}$ denote the set of full (possibly noisy) observations of the state trajectory $x(t) \in \mathbb R^D$. 
Each row of $X^{obs}$ corresponds to one observed state, recorded at the corresponding entry in ${t}^{obs}$.  

In contrast, let $Y \in \mathbb R^{N \times S}$ represent a collection of $S$ observed signals, each measured on a common vector of time points $t = (t_1,\ldots,t_N) \in \mathbb R^N$. 
Each column of $Y$ corresponds to one signal. 
We assume these signals arise through a linear observation operator $H \in \mathbb R^{S \times D}$ applied to the (unknown) full state matrix $X \in \mathbb R^{N \times D}$:
\begin{equation}
    Y = X H^\top.
\end{equation}

Our goal in this setting is to construct a differentiable, vector-valued function 
\[
\widehat{{x}} : \mathbb R \;\to\; \mathbb R^D
\]
that approximates the true state trajectory $\boldsymbol{x}(t)$. 
We measure accuracy via the expected squared error
\begin{equation}
   \mathbb E_{t \sim \mathcal T}
    \big\|
    \widehat{{x}}(t) - {{x}}(t)
    \big\|^2
    =
    \int_{\mathrm{supp}(\mathcal T)}
    p_{\mathcal T}(t)\,
    \big\|
    \widehat{{x}}(t) - {{x}}(t)
    \big\|^2 \, dt,
\end{equation}
where $\mathcal T$ denotes the distribution of observation times. 
Assuming $\mathcal T$ is uniform, this reduces to a rescaled $L^2$ distance between the reconstructed and true trajectories.  

From the perspective of statistical learning theory, this task is equivalent to learning a function that maps points from a real interval (time) into a higher-dimensional space $\mathbb R^D$, while accounting for partial and noisy observations.

In light of such consideration and our assumption of a sparse-observations regime ($N_{obs} \ll N$), a further regularization step is required to estimate the state in a meaningful way. This result can be obtained both  by incorporating information on the fidelity of the signal (i.e. how the signal, which is fully observed, is reconstructed by the application of the operator $ H$ on the predicted state) and by imposing further regularization constraints that may come from domain knowledge related to the specific nature of the dynamical system. Formally, we obtain the regularized risk functional

\begin{equation}
\mathcal R_{reg}(\hat {x})
:=
\mathbb E_{t \sim \mathcal T}
\left\|
\widehat{x}(t) - x(t)
\right\|^2
+
\mathbb E_{t \sim \mathcal T}
\left\|
H \widehat{x}(t) - Hx(t)
\right\|^2    
+ 
\lambda \,\mathfrak R(\widehat{{x}} \mid \mathscr C),
\end{equation}

where $\mathfrak R$ denotes a regularization operator and $\mathscr C$ represents the contextual knowledge used in constructing the regularization. In the absence of regularization ($\lambda = 0$), this reduces to the case described by Lemma~\ref{lemma:comp}, from which we infer the induced equivalence between the two risk measures.

To formulate the regression problem associated with the constructed risk measure, we express $\widehat{{x}}$ as a linear combination of features obtained via a map $\phi: \mathbb R \rightarrow \mathcal H$ into a Hilbert space $\mathcal H$. Under this hypothesis, we can reparameterize the risk measure as
\begin{equation}
\begin{split}
\mathcal R_{reg}(\{w_j\}_j) =
\mathbb E_{t \sim \mathcal T}
\sum_{j=1}^D
\left(
\phi(t)^\top {w}_j  - x_j(t)
\right)^2 
+
\mathbb E_{t \sim \mathcal T}
\sum_{s=1}^S
\left(
\sum_{j=1}^D
H_{sj}\,\phi(t)^\top {w}_j  - y_s(t)
\right)^2
+ 
\lambda \,\mathfrak R(\{w_j\}_j \mid \mathscr C).
\end{split}
\end{equation}

Replacing expectations by empirical averages yields
\begin{equation}
\begin{split}
\widehat{\mathcal R}_{reg}(\{w_j\}_j)
&= 
\frac{1}{N_{obs}} \sum_{i=1}^{N_{obs}}
\sum_{j=1}^D
\Bigl(
\phi(t^{obs}_i)^\top {w}_j - X_{ij}^{obs}
\Bigr)^2 \\[0.5em]
&\quad +
\frac{1}{N} \sum_{i=1}^N
\sum_{s=1}^S
\Bigl(
\sum_{j=1}^D H_{sj}\,\phi(t_i)^\top {w}_j - Y_{is}
\Bigr)^2
+ 
\lambda \,\mathfrak R(
\{w_j\}_j
\mid \mathscr C).
\end{split}
\end{equation}

Since $\phi$ may be infinite-dimensional, we restrict ${w}_1,\ldots,{w}_D$ to the span of $\{\phi(t_i)\}_{i=1}^N$, writing
\begin{equation}
{w}_j = \sum_{\ell=1}^N u_{\ell j}\,\phi(t_\ell), \qquad j=1,\ldots,D,
\end{equation}
for coefficients $\{u_{\ell j}\} \subset \mathbb R$ arranged in a matrix $U \in \mathbb R^{N \times D}$. Using the kernel trick $\phi(x)^\top \phi(y) = \kappa(x,y)$, the empirical loss becomes
\begin{equation}
\begin{split}
\widehat{\mathcal R}_{reg}( U)
&= 
\frac{1}{N_{obs}} \sum_{i=1}^{N_{obs}}
\sum_{j=1}^D
\left(
\sum_{\ell=1}^N u_{\ell j}\,\kappa(t^{obs}_i,t_\ell) - X_{ij}^{obs}
\right)^2 \\[0.5em]
&\quad +
\frac{1}{N} \sum_{i=1}^N
\sum_{s=1}^S
\left(
\sum_{j=1}^D \sum_{\ell=1}^N H_{sj}\,u_{\ell j}\,\kappa(t_i,t_\ell) - Y_{is}
\right)^2
+ 
\lambda \,\mathfrak R(U \mid \mathscr C).
\end{split}
\end{equation}

Defining kernel matrices $[ K^{obs}]_{i\ell} := \kappa(t^{obs}_i,t_\ell)$ and $[K]_{i\ell} := \kappa(t_i,t_\ell)$, we obtain the compact form
\begin{equation}
\widehat{\mathcal R}_{reg}(U)
=
\frac{1}{N_{obs}}
\left\|
 K^{obs} U -  X^{obs}
\right\|_F^2
+
\frac{1}{N}
\left\|
KUH^\top -  Y
\right\|_F^2
+
\lambda \,\mathfrak R(U \mid \mathscr C).
\end{equation}

with the reconstructed state being, again in matrix form
\begin{equation}
    \widehat{ X} = K U.
\end{equation} 
Interestingly, the choice of the Gaussian feature map allows explicit computation of the derivatives which can be employed in the construction of context aware constraints in $\mathcal R$. In fact for any differential operator $T$ we obtain
\begin{align*}
    T \widehat{x_j}(t)
    & = 
    T \left\{ \phi(t)^\top \sum_i^N \phi(t_i) u_{ij} \right\}
    \\
    & = 
    T \left\{ \sum_i^N \kappa(t,t_i) u_{ij} \right\}
    \\
    & = 
    \sum_i^N  T \kappa(t,t_i) u_{ij} 
\end{align*}
with $T \kappa$, the image of the kernel $\kappa$ through the linear operator $T$, being analytically computable for the case of the Gaussian feature map.

We note that our method intrinsically supports the inclusion of additional loss terms, which can encode prior knowledge or structural constraints. Such terms may be specified directly by domain experts or automatically suggested by large language models. For example, one may penalize large deviations in the dynamics by introducing a regularizer of the form
\begin{equation}
    \mathfrak R(U) \;=\;
    \|\dot{K} U\|_F^2,
\end{equation}
where $\dot{ K}$ denotes the Gram matrix after applying the time-derivative operator. 
More generally, this mechanism allows the integration of expert priors (e.g., known critical points, concavity properties, or monotonicity constraints) into the reconstruction process. 
In this way, our framework mimics the role of a human statistician in guiding model specification, while retaining the flexibility to incorporate data-driven or automatically generated hypotheses.

\section{Extended Experimental Results}\label{app:extended_results}
\subsection{Additional noise regimes}

Tables~\ref{tab:maat_selected_dataset_results_correlated} and~\ref{tab:maat_selected_dataset_results_student} further demonstrate that \texttt{MAAT} consistently outperforms competing baselines under both correlated Gaussian noise and heavy-tailed Student-$t$ noise.

\begin{table*}[h]
\centering
\caption{State reconstruction MSE ($\downarrow$) semi-synthetic benchmark datasets. Values are mean $\pm$ confidence interval. Best result for each dataset-backend pair is bolded. Noise type: Correlated Gaussian.}
\label{tab:maat_selected_dataset_results_correlated}
\tiny
\setlength{\tabcolsep}{2.0pt}
\renewcommand{\arraystretch}{0.92}
\resizebox{\textwidth}{!}{%
\begin{tabular}{llccccccccc}
\toprule
\multirow{2}{*}{Method} & \multirow{2}{*}{Backend} & \multicolumn{3}{c}{Dynamical systems} & \multicolumn{3}{c}{Epidemiology / dynamics} & \multicolumn{3}{c}{Oncology / viral} \\
\cmidrule(lr){3-5}\cmidrule(lr){6-8}\cmidrule(lr){9-11}
& & CRC & Cons. & Neut. & SEIR & SEIRH & TMDD & Tumor & TDI & Viral \\
\midrule
\multirow{2}{*}{RBF}
& PySR & $4.3\e{-2}\!\pm\!5.6\e{-3}$ & $3.3\e{1}\!\pm\!5.7\e{0}$ & $8.8\e{-3}\!\pm\!7.8\e{-3}$ & $2.9\e{-3}\!\pm\!2.8\e{-4}$ & $2.7\e{-3}\!\pm\!2.6\e{-4}$ & $3.4\e{-1}\!\pm\!4.7\e{-2}$ & $3.1\e{0}\!\pm\!3.9\e{-1}$ & $1.8\e{1}\!\pm\!4.1\e{0}$ & $2.6\e{-3}\!\pm\!2.4\e{-4}$ \\
& SINDy & $4.3\e{-2}\!\pm\!5.7\e{-3}$ & $3.6\e{1}\!\pm\!5.7\e{0}$ & $8.7\e{-3}\!\pm\!7.6\e{-3}$ & $3.0\e{-3}\!\pm\!2.7\e{-4}$ & $2.7\e{-3}\!\pm\!2.6\e{-4}$ & $3.4\e{-1}\!\pm\!4.6\e{-2}$ & $3.1\e{0}\!\pm\!3.6\e{-1}$ & $1.9\e{1}\!\pm\!4.1\e{0}$ & $2.7\e{-3}\!\pm\!2.2\e{-4}$ \\
\addlinespace[1.2pt]
\multirow{2}{*}{Cubic}
& PySR & $5.6\e{-2}\!\pm\!8.4\e{-3}$ & $4.2\e{1}\!\pm\!6.3\e{0}$ & $1.2\e{-2}\!\pm\!1.1\e{-2}$ & $3.8\e{-3}\!\pm\!4.4\e{-4}$ & $3.6\e{-3}\!\pm\!4.2\e{-4}$ & $4.5\e{-1}\!\pm\!6.5\e{-2}$ & $4.0\e{0}\!\pm\!5.1\e{-1}$ & $3.6\e{1}\!\pm\!7.3\e{0}$ & $3.5\e{-3}\!\pm\!3.3\e{-4}$ \\
& SINDy & $5.6\e{-2}\!\pm\!8.5\e{-3}$ & $4.5\e{1}\!\pm\!6.6\e{0}$ & $1.2\e{-2}\!\pm\!1.1\e{-2}$ & $3.9\e{-3}\!\pm\!4.4\e{-4}$ & $3.7\e{-3}\!\pm\!4.2\e{-4}$ & $4.5\e{-1}\!\pm\!6.5\e{-2}$ & $4.0\e{0}\!\pm\!5.2\e{-1}$ & $3.7\e{1}\!\pm\!7.5\e{0}$ & $3.6\e{-3}\!\pm\!3.2\e{-4}$ \\
\addlinespace[1.2pt]
\multirow{2}{*}{GP}
& PySR & $2.3\e{-1}\!\pm\!2.5\e{-1}$ & $2.3\e{2}\!\pm\!2.1\e{2}$ & $4.8\e{-2}\!\pm\!4.7\e{-2}$ & $5.0\e{-2}\!\pm\!8.7\e{-2}$ & $7.7\e{-3}\!\pm\!7.7\e{-3}$ & $9.2\e{-2}\!\pm\!4.1\e{-2}$ & $4.0\e{1}\!\pm\!4.1\e{1}$ & $1.9\e{3}\!\pm\!3.2\e{3}$ & $2.6\e{-2}\!\pm\!5.8\e{-2}$ \\
& SINDy & $4.1\e{-1}\!\pm\!2.5\e{-1}$ & $2.4\e{2}\!\pm\!3.3\e{2}$ & $2.0\e{-1}\!\pm\!4.7\e{-1}$ & $5.9\e{-3}\!\pm\!6.0\e{-3}$ & $2.2\e{-3}\!\pm\!2.2\e{-3}$ & $8.0\e{-2}\!\pm\!3.4\e{-2}$ & $2.9\e{1}\!\pm\!2.5\e{1}$ & $2.1\e{1}\!\pm\!1.9\e{1}$ & $1.9\e{-2}\!\pm\!1.4\e{-2}$ \\
\addlinespace[1.2pt]
\multirow{2}{*}{Kalman}
& PySR & $1.2\e{-2}\!\pm\!1.6\e{-3}$ & $\mathbf{1.0\e{1}}\!\mathbf{\pm}\!\mathbf{2.9\e{0}}$ & $2.6\e{-3}\!\pm\!2.4\e{-3}$ & $7.7\e{-4}\!\pm\!1.3\e{-4}$ & $7.3\e{-4}\!\pm\!1.3\e{-4}$ & $8.8\e{-2}\!\pm\!1.1\e{-2}$ & $9.1\e{-1}\!\pm\!1.4\e{-1}$ & $1.8\e{1}\!\pm\!3.8\e{0}$ & $7.0\e{-4}\!\pm\!7.4\e{-5}$ \\
& SINDy & $1.2\e{-2}\!\pm\!1.7\e{-3}$ & $1.3\e{1}\!\pm\!2.8\e{0}$ & $2.6\e{-3}\!\pm\!2.3\e{-3}$ & $8.3\e{-4}\!\pm\!1.3\e{-4}$ & $7.5\e{-4}\!\pm\!1.3\e{-4}$ & $8.8\e{-2}\!\pm\!1.1\e{-2}$ & $9.2\e{-1}\!\pm\!1.4\e{-1}$ & $1.8\e{1}\!\pm\!4.0\e{0}$ & $7.9\e{-4}\!\pm\!6.7\e{-5}$ \\
\addlinespace[1.2pt]
\multirow{2}{*}{Linear}
& PySR & $2.8\e{-2}\!\pm\!3.8\e{-3}$ & $2.4\e{1}\!\pm\!4.2\e{0}$ & $6.0\e{-3}\!\pm\!5.5\e{-3}$ & $1.9\e{-3}\!\pm\!2.1\e{-4}$ & $1.8\e{-3}\!\pm\!2.0\e{-4}$ & $2.2\e{-1}\!\pm\!3.1\e{-2}$ & $2.1\e{0}\!\pm\!2.6\e{-1}$ & $1.7\e{1}\!\pm\!3.7\e{0}$ & $1.7\e{-3}\!\pm\!1.6\e{-4}$ \\
& SINDy & $2.8\e{-2}\!\pm\!3.9\e{-3}$ & $2.5\e{1}\!\pm\!4.3\e{0}$ & $6.0\e{-3}\!\pm\!5.3\e{-3}$ & $2.0\e{-3}\!\pm\!2.1\e{-4}$ & $1.8\e{-3}\!\pm\!2.0\e{-4}$ & $2.2\e{-1}\!\pm\!3.1\e{-2}$ & $2.1\e{0}\!\pm\!2.7\e{-1}$ & $1.8\e{1}\!\pm\!3.9\e{0}$ & $1.8\e{-3}\!\pm\!1.5\e{-4}$ \\
\addlinespace[1.2pt]
\multirow{2}{*}{NeuralODE}
& PySR & $6.2\e{1}\!\pm\!1.3\e{2}$ & $4.8\e{12}\!\pm\!1.1\e{13}$ & $9.4\e{0}\!\pm\!2.0\e{1}$ & $1.1\e{0}\!\pm\!6.1\e{-1}$ & $4.8\e{-1}\!\pm\!1.4\e{-1}$ & $1.9\e{0}\!\pm\!2.6\e{0}$ & $8.6\e{4}\!\pm\!1.4\e{5}$ & $8.6\e{8}\!\pm\!1.9\e{9}$ & $6.0\e{-1}\!\pm\!3.0\e{-1}$ \\
& SINDy & $1.8\e{0}\!\pm\!8.5\e{-1}$ & $1.1\e{3}\!\pm\!1.0\e{3}$ & $7.4\e{-1}\!\pm\!7.5\e{-1}$ & $8.0\e{-1}\!\pm\!6.7\e{-1}$ & $3.2\e{-1}\!\pm\!1.2\e{-1}$ & $2.2\e{0}\!\pm\!3.1\e{0}$ & $2.4\e{2}\!\pm\!2.5\e{2}$ & $4.1\e{3}\!\pm\!9.2\e{3}$ & $5.2\e{-1}\!\pm\!3.1\e{-1}$ \\
\midrule
\multirow{2}{*}{\textbf{\texttt{MAAT}}}
& \multicolumn{1}{>{\columncolor{gray!7}}c}{PySR}
& \multicolumn{1}{>{\columncolor{gray!7}}c}{$\mathbf{3.6\e{-3}\!\pm\!1.9\e{-3}}$}
& \multicolumn{1}{>{\columncolor{gray!7}}c}{$1.1\e{1}\!\pm\!1.3\e{1}$}
& \multicolumn{1}{>{\columncolor{gray!7}}c}{$\mathbf{3.4\e{-4}\!\pm\!3.5\e{-4}}$}
& \multicolumn{1}{>{\columncolor{gray!7}}c}{$\mathbf{2.4\e{-5}\!\pm\!4.0\e{-6}}$}
& \multicolumn{1}{>{\columncolor{gray!7}}c}{$\mathbf{1.7\e{-5}\!\pm\!2.6\e{-6}}$}
& \multicolumn{1}{>{\columncolor{gray!7}}c}{$\mathbf{3.4\e{-2}\!\pm\!3.7\e{-2}}$}
& \multicolumn{1}{>{\columncolor{gray!7}}c}{$\mathbf{3.9\e{-1}\!\pm\!2.8\e{-1}}$}
& \multicolumn{1}{>{\columncolor{gray!7}}c}{$\mathbf{5.4\e{0}\!\pm\!5.9\e{0}}$}
& \multicolumn{1}{>{\columncolor{gray!7}}c}{$\mathbf{4.1\e{-5}\!\pm\!1.9\e{-5}}$} \\
& \multicolumn{1}{>{\columncolor{gray!7}}c}{SINDy}
& \multicolumn{1}{>{\columncolor{gray!7}}c}{$\mathbf{1.4\e{-3}\!\pm\!1.3\e{-4}}$}
& \multicolumn{1}{>{\columncolor{gray!7}}c}{$\mathbf{5.8\e{0}\!\pm\!4.9\e{0}}$}
& \multicolumn{1}{>{\columncolor{gray!7}}c}{$\mathbf{4.1\e{-4}\!\pm\!4.6\e{-4}}$}
& \multicolumn{1}{>{\columncolor{gray!7}}c}{$\mathbf{7.9\e{-5}\!\pm\!9.3\e{-6}}$}
& \multicolumn{1}{>{\columncolor{gray!7}}c}{$\mathbf{4.2\e{-5}\!\pm\!6.0\e{-6}}$}
& \multicolumn{1}{>{\columncolor{gray!7}}c}{$\mathbf{4.8\e{-3}\!\pm\!6.3\e{-4}}$}
& \multicolumn{1}{>{\columncolor{gray!7}}c}{$\mathbf{1.7\e{-1}\!\pm\!6.1\e{-2}}$}
& \multicolumn{1}{>{\columncolor{gray!7}}c}{$\mathbf{1.7\e{0}\!\pm\!4.8\e{-1}}$}
& \multicolumn{1}{>{\columncolor{gray!7}}c}{$\mathbf{1.3\e{-4}\!\pm\!2.5\e{-5}}$} \\
\bottomrule
\end{tabular}}
\end{table*}

\begin{table*}[h]
\centering
\caption{State reconstruction MSE ($\downarrow$) semi-synthetic benchmark datasets. Values are mean $\pm$ confidence interval. Best result for each dataset-backend pair is bolded. Noise type: Student T.}
\label{tab:maat_selected_dataset_results_student}
\tiny
\setlength{\tabcolsep}{2.0pt}
\renewcommand{\arraystretch}{0.92}
\resizebox{\textwidth}{!}{%
\begin{tabular}{llccccccccc}
\toprule
\multirow{2}{*}{Method} & \multirow{2}{*}{Backend} & \multicolumn{3}{c}{Dynamical systems} & \multicolumn{3}{c}{Epidemiology / dynamics} & \multicolumn{3}{c}{Oncology / viral} \\
\cmidrule(lr){3-5}\cmidrule(lr){6-8}\cmidrule(lr){9-11}
& & CRC & Cons. & Neut. & SEIR & SEIRH & TMDD & Tumor & TDI & Viral \\
\midrule
\multirow{2}{*}{RBF}
& PySR & $1.5\e{-1}\!\pm\!3.1\e{-2}$ & $1.2\e{2}\!\pm\!1.8\e{1}$ & $3.1\e{-2}\!\pm\!2.6\e{-2}$ & $1.1\e{-2}\!\pm\!2.3\e{-3}$ & $9.0\e{-3}\!\pm\!1.1\e{-3}$ & $1.3\e{0}\!\pm\!1.7\e{-1}$ & $1.1\e{1}\!\pm\!1.3\e{0}$ & $5.7\e{1}\!\pm\!1.3\e{1}$ & $8.4\e{-3}\!\pm\!1.4\e{-3}$ \\
& SINDy & $1.5\e{-1}\!\pm\!3.1\e{-2}$ & $1.2\e{2}\!\pm\!1.8\e{1}$ & $3.1\e{-2}\!\pm\!2.6\e{-2}$ & $1.1\e{-2}\!\pm\!2.3\e{-3}$ & $9.1\e{-3}\!\pm\!1.1\e{-3}$ & $1.3\e{0}\!\pm\!1.7\e{-1}$ & $1.1\e{1}\!\pm\!1.3\e{0}$ & $5.8\e{1}\!\pm\!1.3\e{1}$ & $8.5\e{-3}\!\pm\!1.4\e{-3}$ \\
\addlinespace[1.2pt]
\multirow{2}{*}{Cubic}
& PySR & $2.3\e{-1}\!\pm\!4.2\e{-2}$ & $1.9\e{2}\!\pm\!3.6\e{1}$ & $4.9\e{-2}\!\pm\!3.7\e{-2}$ & $1.6\e{-2}\!\pm\!3.4\e{-3}$ & $1.4\e{-2}\!\pm\!1.9\e{-3}$ & $2.1\e{0}\!\pm\!5.0\e{-1}$ & $1.9\e{1}\!\pm\!2.0\e{0}$ & $1.6\e{2}\!\pm\!5.0\e{1}$ & $1.4\e{-2}\!\pm\!3.5\e{-3}$ \\
& SINDy & $2.3\e{-1}\!\pm\!4.2\e{-2}$ & $1.9\e{2}\!\pm\!3.6\e{1}$ & $4.8\e{-2}\!\pm\!3.7\e{-2}$ & $1.6\e{-2}\!\pm\!3.4\e{-3}$ & $1.4\e{-2}\!\pm\!1.9\e{-3}$ & $2.1\e{0}\!\pm\!5.0\e{-1}$ & $1.9\e{1}\!\pm\!2.0\e{0}$ & $1.6\e{2}\!\pm\!5.0\e{1}$ & $1.5\e{-2}\!\pm\!3.5\e{-3}$ \\
\addlinespace[1.2pt]
\multirow{2}{*}{GP}
& PySR & $6.8\e{-1}\!\pm\!6.3\e{-1}$ & $1.9\e{2}\!\pm\!1.7\e{2}$ & $6.4\e{-2}\!\pm\!4.5\e{-2}$ & $1.3\e{-2}\!\pm\!1.3\e{-2}$ & $7.4\e{-3}\!\pm\!6.7\e{-3}$ & $\mathbf{4.3\e{-2}}\!\mathbf{\pm}\!\mathbf{1.2\e{-2}}$ & $2.7\e{1}\!\pm\!3.4\e{1}$ & $4.6\e{2}\!\pm\!4.8\e{2}$ & $3.6\e{-2}\!\pm\!4.9\e{-2}$ \\
& SINDy & $3.9\e{-1}\!\pm\!3.2\e{-1}$ & $3.2\e{2}\!\pm\!3.5\e{2}$ & $9.0\e{-2}\!\pm\!6.2\e{-2}$ & $5.2\e{-3}\!\pm\!4.6\e{-3}$ & $3.9\e{-3}\!\pm\!3.6\e{-3}$ & $6.4\e{-2}\!\pm\!4.1\e{-2}$ & $3.2\e{1}\!\pm\!2.5\e{1}$ & $7.4\e{2}\!\pm\!1.2\e{3}$ & $1.4\e{-2}\!\pm\!1.2\e{-2}$ \\
\addlinespace[1.2pt]
\multirow{2}{*}{Kalman}
& PySR & $1.1\e{-2}\!\pm\!2.5\e{-3}$ & $9.8\e{0}\!\pm\!2.3\e{0}$ & $2.6\e{-3}\!\pm\!2.5\e{-3}$ & $8.4\e{-4}\!\pm\!2.6\e{-4}$ & $6.6\e{-4}\!\pm\!1.1\e{-4}$ & $1.1\e{-1}\!\pm\!1.9\e{-2}$ & $8.0\e{-1}\!\pm\!1.6\e{-1}$ & $4.9\e{1}\!\pm\!1.1\e{1}$ & $6.5\e{-4}\!\pm\!1.3\e{-4}$ \\
& SINDy & $1.1\e{-2}\!\pm\!2.6\e{-3}$ & $1.3\e{1}\!\pm\!2.1\e{0}$ & $2.5\e{-3}\!\pm\!2.3\e{-3}$ & $8.9\e{-4}\!\pm\!2.6\e{-4}$ & $6.7\e{-4}\!\pm\!1.1\e{-4}$ & $1.0\e{-1}\!\pm\!1.9\e{-2}$ & $8.1\e{-1}\!\pm\!1.6\e{-1}$ & $4.9\e{1}\!\pm\!1.2\e{1}$ & $7.5\e{-4}\!\pm\!1.2\e{-4}$ \\
\addlinespace[1.2pt]
\multirow{2}{*}{Linear}
& PySR & $7.5\e{-2}\!\pm\!1.6\e{-2}$ & $6.4\e{1}\!\pm\!1.1\e{1}$ & $1.6\e{-2}\!\pm\!1.3\e{-2}$ & $5.7\e{-3}\!\pm\!1.3\e{-3}$ & $4.6\e{-3}\!\pm\!5.3\e{-4}$ & $6.7\e{-1}\!\pm\!9.8\e{-2}$ & $5.9\e{0}\!\pm\!6.6\e{-1}$ & $5.3\e{1}\!\pm\!1.2\e{1}$ & $4.3\e{-3}\!\pm\!7.9\e{-4}$ \\
& SINDy & $7.4\e{-2}\!\pm\!1.6\e{-2}$ & $6.5\e{1}\!\pm\!1.1\e{1}$ & $1.6\e{-2}\!\pm\!1.3\e{-2}$ & $5.7\e{-3}\!\pm\!1.3\e{-3}$ & $4.6\e{-3}\!\pm\!5.3\e{-4}$ & $6.7\e{-1}\!\pm\!9.7\e{-2}$ & $5.8\e{0}\!\pm\!6.5\e{-1}$ & $5.3\e{1}\!\pm\!1.2\e{1}$ & $4.4\e{-3}\!\pm\!7.8\e{-4}$ \\
\addlinespace[1.2pt]
\multirow{2}{*}{NeuralODE}
& PySR & $1.1\e{1}\!\pm\!1.7\e{1}$ & $2.1\e{10}\!\pm\!3.4\e{10}$ & $2.5\e{1}\!\pm\!6.5\e{1}$ & $9.6\e{-1}\!\pm\!5.9\e{-1}$ & $5.3\e{-1}\!\pm\!3.2\e{-1}$ & $1.0\e{0}\!\pm\!7.5\e{-1}$ & $7.5\e{7}\!\pm\!1.2\e{8}$ & $3.4\e{10}\!\pm\!7.5\e{10}$ & $1.7\e{0}\!\pm\!2.4\e{0}$ \\
& SINDy & $2.2\e{0}\!\pm\!2.9\e{0}$ & $4.1\e{3}\!\pm\!5.1\e{3}$ & $1.1\e{0}\!\pm\!9.1\e{-1}$ & $5.1\e{-1}\!\pm\!2.6\e{-1}$ & $4.5\e{-1}\!\pm\!2.2\e{-1}$ & $1.3\e{0}\!\pm\!1.1\e{0}$ & $2.6\e{2}\!\pm\!2.8\e{2}$ & $7.4\e{1}\!\pm\!8.8\e{1}$ & $7.9\e{-1}\!\pm\!6.4\e{-1}$ \\
\midrule
\multirow{2}{*}{\textbf{\texttt{MAAT}}}
& \multicolumn{1}{>{\columncolor{gray!7}}c}{PySR}
& \multicolumn{1}{>{\columncolor{gray!7}}c}{$\mathbf{5.6\e{-3}\!\pm\!4.1\e{-3}}$}
& \multicolumn{1}{>{\columncolor{gray!7}}c}{$\mathbf{5.1\e{0}\!\pm\!4.8\e{0}}$}
& \multicolumn{1}{>{\columncolor{gray!7}}c}{$\mathbf{3.6\e{-4}\!\pm\!4.8\e{-4}}$}
& \multicolumn{1}{>{\columncolor{gray!7}}c}{$\mathbf{2.4\e{-5}\!\pm\!3.1\e{-6}}$}
& \multicolumn{1}{>{\columncolor{gray!7}}c}{$\mathbf{1.7\e{-5}\!\pm\!1.1\e{-6}}$}
& \multicolumn{1}{>{\columncolor{gray!7}}c}{$4.3\e{-2}\!\pm\!2.6\e{-2}$}
& \multicolumn{1}{>{\columncolor{gray!7}}c}{$\mathbf{2.4\e{-1}\!\pm\!1.4\e{-1}}$}
& \multicolumn{1}{>{\columncolor{gray!7}}c}{$\mathbf{9.1\e{0}\!\pm\!1.4\e{1}}$}
& \multicolumn{1}{>{\columncolor{gray!7}}c}{$\mathbf{4.3\e{-5}\!\pm\!2.2\e{-5}}$} \\
& \multicolumn{1}{>{\columncolor{gray!7}}c}{SINDy}
& \multicolumn{1}{>{\columncolor{gray!7}}c}{$\mathbf{1.4\e{-3}\!\pm\!1.4\e{-4}}$}
& \multicolumn{1}{>{\columncolor{gray!7}}c}{$\mathbf{4.5\e{0}\!\pm\!2.0\e{0}}$}
& \multicolumn{1}{>{\columncolor{gray!7}}c}{$\mathbf{4.2\e{-4}\!\pm\!5.0\e{-4}}$}
& \multicolumn{1}{>{\columncolor{gray!7}}c}{$\mathbf{7.7\e{-5}\!\pm\!8.2\e{-6}}$}
& \multicolumn{1}{>{\columncolor{gray!7}}c}{$\mathbf{4.1\e{-5}\!\pm\!4.3\e{-6}}$}
& \multicolumn{1}{>{\columncolor{gray!7}}c}{$\mathbf{4.8\e{-3}\!\pm\!6.1\e{-4}}$}
& \multicolumn{1}{>{\columncolor{gray!7}}c}{$\mathbf{1.3\e{-1}\!\pm\!3.0\e{-2}}$}
& \multicolumn{1}{>{\columncolor{gray!7}}c}{$\mathbf{1.7\e{0}\!\pm\!4.2\e{-1}}$}
& \multicolumn{1}{>{\columncolor{gray!7}}c}{$\mathbf{1.2\e{-4}\!\pm\!2.3\e{-5}}$} \\
\bottomrule
\end{tabular}}
\end{table*}

\subsection{Computational Complexity}
\label{app:complexity}

A potential concern with kernel-based methods is the $\mathcal{O}(N^3)$ cost of direct Gram-matrix inversion. However, \texttt{MAAT} avoids this bottleneck entirely: our implementation relies on first-order optimization (Adam) rather than closed-form matrix inversion. Each iteration requires only matrix-vector products with the kernel matrix, reducing per-step complexity to $\mathcal{O}(N^2)$. This makes the method scalable to long or high-resolution trajectories. The situation is analogous to neural network training: although exact loss minimization is NP-hard in the worst case \citep{Froese2023}, practical gradient-based optimizers achieve good solutions efficiently.

Table~\ref{tab:wallclock} reports wall-clock time and peak memory usage across all baselines, averaged over the benchmark suite on identical hardware. \texttt{MAAT} incurs moderate overhead relative to classical smoothers (e.g., splines, Savitzky--Golay), but remains substantially faster than Neural ODEs. While slower than lightweight SINDy-based pipelines, it achieves consistently superior reconstruction accuracy. Its memory footprint is higher than simple interpolation methods but remains comparable to other kernel and neural approaches.

\begin{table}[h]
\centering
\small
\caption{\textbf{Computational cost comparison.} Mean wall-clock time and peak memory usage across all benchmarks, separated by downstream backend.}
\label{tab:wallclock}
\setlength{\tabcolsep}{6pt}
\begin{tabular}{lcc|cc}
\toprule
& \multicolumn{2}{c}{\textbf{PySR}} & \multicolumn{2}{c}{\textbf{SINDy}} \\
\cmidrule(lr){2-3}\cmidrule(lr){4-5}
\textbf{Method} & \textbf{Time (s)} & \textbf{Mem (MB)} & \textbf{Time (s)} & \textbf{Mem (MB)} \\
\midrule
Linear & 569.44 & 1331.03 & 3.99 & 220.72 \\
RBF & 594.47 & 1330.07 & 10.65 & 222.54 \\
Cubic & 575.01 & 1334.95 & 3.96 & 221.07 \\
Savitzky--Golay & 583.39 & 1336.63 & 4.35 & 222.14 \\
Kalman filter & 586.86 & 1348.64 & 9.56 & 230.78 \\
TVRegDiff & 591.17 & 1340.55 & 4.15 & 222.15 \\
Gaussian Process & 668.85 & 1332.21 & 149.01 & 232.16 \\
Neural ODE & 1165.29 & 1482.51 & 489.41 & 457.45 \\
\midrule
\texttt{MAAT} (ours) & 693.39 & 1445.34 & 183.87 & 322.86 \\
\bottomrule
\end{tabular}
\end{table}

\section{Experimental Details}\label{app:experimental_details}
Our evaluation suite consists of several ODE models commonly used in computational biology and pharmacology. We evaluate \texttt{MAAT} on both standard dynamical systems benchmarks and clinically motivated pharmacological models, reflecting the clinical relevance emphasized in our motivation. At the same time, we demonstrate the applicability of our method beyond these settings by including representative models from additional scientific domains.

\subsection{Dynamical systems}

\paragraph{Colorectal cancer model}
We adopt the seven-state CRC–mAb–IL2–chemo system modelling tumour burden ($T$), natural killer cells ($N$), lymphocytes ($L$), circulating chemotherapy ($C$), monoclonal antibody ($M$), cytokines ($I$), and a secondary antibody pool ($A$) \citep{dePillis2014}.  

\paragraph{Neutrophil life-cycle model}
The Friberg-type transit system tracks proliferating precursors (Prol), transit compartments (T$_1$, T$_2$, T$_3$), marrow reservoir (Reserv), and circulating neutrophils (Circ) \citep{Friberg2002}. 

\paragraph{Consumeristic socio-ecological model}
Population ($x$), renewable resources ($y$), non-renewable resources ($z$), and wealth ($w$) \citep{Badiale2024}.

\subsection{Epidemiology}

\paragraph{SEIR compartmental epidemic model}
Susceptible ($S$), exposed ($E$), infectious ($I$), and removed ($R$) populations evolve under homogeneous mixing \citep{Kermack1927}:

\paragraph{SEIRH epidemic model with hospitalisation}
Extending SEIR with a hospitalised class $H$ and transition rate $\delta$ \citep{Bjrnstad2020}.

\paragraph{Tumour PK/PD (TMDD + RO–driven TGI)}
We couple a 3-compartment monoclonal-antibody PK (central $C_c$, peripheral $C_p$, tumour $C_t$) with target-mediated drug disposition (TMDD) at tumour/peripheral sites, and drive tumour growth inhibition (TGI) by receptor occupancy (RO). 
 This construction is consistent with TMDD foundations and minimal/PBPK mAb models that include a tumour (interstitial) site and use RO as the pharmacodynamic driver of TGI \citep{Simeoni2004}. All tumour‐rate constants are converted from per-day to per-hour in implementation.

\subsection{Oncology/Viral dynamical systems}

\paragraph{Tumour model} Custom logistic model of tumor growth, studying the temporal evolution of its volume.

\begin{align}
\begin{cases}
\frac{dC_c}{dt} &= -\frac{CL}{V_c} C_c - \frac{Q_p}{V_c}(C_c - C_p) - \frac{Q_t}{V_c}(C_c - C_t) \\[6pt]
\frac{dC_p}{dt} &= \frac{Q_p}{V_p}(C_c - C_p) \\
\frac{dC_t}{dt} &= \frac{Q_t}{V_t}(C_c - C_t) \\
\frac{dR_A}{dt} &= -k_{\mathrm{on},A}\, C_t\, R_A + k_{\mathrm{off},A}\, (R_{A,\mathrm{tot}} - R_A) \\
\frac{dR_B}{dt} &= -k_{\mathrm{on},B}\, C_p\, R_B + k_{\mathrm{off},B}\, (R_{B,\mathrm{tot}} - R_B) \\
\frac{dT}{dt} &= K_g\, T \left(1 - \frac{T}{T_{\max}}\right)
- K_k \, \frac{R_{A,\mathrm{bound}}}{IC_{50} + R_{A,\mathrm{bound}}} \, T
\end{cases}
\end{align}
with
\begin{align}
C_c &:\ \text{Drug concentration in the central (plasma) compartment} \\[4pt]
C_p &:\ \text{Drug concentration in the peripheral compartment} \\[4pt]
C_t &:\ \text{Drug concentration in the tumour compartment} \\[6pt]
R_A &:\ \text{Unbound receptor A concentration} \\
R_B &:\ \text{Unbound receptor B concentration} \\
R_{A,\mathrm{tot}} &:\ \text{Total receptor A concentration} \\
R_{B,\mathrm{tot}} &:\ \text{Total receptor B concentration} \\
R_{A,\mathrm{bound}} &:\ \text{Bound receptor A concentration} = R_{A,\mathrm{tot}} - R_A \\[6pt]
T &:\ \text{Tumour volume} \\[10pt]
V_c &:\ \text{Central compartment volume} \\
V_p &:\ \text{Peripheral compartment volume} \\
V_t &:\ \text{Tumour compartment volume} \\[6pt]
CL &:\ \text{Clearance from central compartment} \\
Q_p &:\ \text{Inter-compartmental flow between central and peripheral} \\
Q_t &:\ \text{Inter-compartmental flow between central and tumour} \\[6pt]
k_{\mathrm{on},A},\ k_{\mathrm{off},A} &:\ \text{Binding and unbinding rates for receptor A} \\
k_{\mathrm{on},B},\ k_{\mathrm{off},B} &:\ \text{Binding and unbinding rates for receptor B} \\[6pt]
K_g &:\ \text{Tumour growth rate constant (converted to h}^{-1}\text{)} \\
K_k &:\ \text{Maximum tumour kill rate (converted to h}^{-1}\text{)} \\
T_{\max} &:\ \text{Carrying capacity (maximum tumour volume)} \\
IC_{50} &:\ \text{Half-maximal inhibitory concentration for tumour kill}
\end{align}
\paragraph{Viral dynamics model}
Target cells ($T$), eclipse-phase cells ($E$), productively infected cells ($I$), and free virions ($V$) \citep{Nowak2000}.

\paragraph{Tumour–drug–immune interaction}
We consider tumour cells ($T$), chemotherapeutic payload ($M$), immune effectors ($N$), an inflammatory cytokine ($I$), and a tissue-damage biomarker ($K$). This low-order chemo-immuno system follows classical tumour–immune ODEs with chemotherapy cytotoxicity and immunosuppression terms; cytokines and damage markers use standard turnover/indirect–response kinetics \citep{Sharma1998}.

\subsection{COVID-19 data}

We use publicly available data from the European Centre for Disease Prevention and Control (ECDC), comprising daily reported COVID-19 cases and deaths for European countries \citep{ecdc_covid19_data}. These observations are combined with population statistics to obtain normalized trajectories. To stabilize variance and facilitate learning, we apply a $\log(x + 1)$ transformation to the raw time series.  We split countries into training and holdout sets to evaluate generalization across populations. The training set consists of Austria, Belgium, and France ($n=3$), while the holdout set includes Bulgaria, Croatia, Cyprus, Czechia, Denmark, Estonia, and Finland ($n=7$).

\subsection{Dataset Generation}
\label{app:dataset_gen}

\paragraph{ODE Integration and Trajectories.}
For each dynamical system, we generate ground-truth trajectories by integrating the governing Ordinary Differential Equations (ODEs) using a deterministic fourth-order Runge--Kutta (RK4) solver. We use a fixed integration step size $\Delta t$ specific to each system's timescale (see Table~\ref{tab:dataset_master}). 

To evaluate robustness to model mismatch, we sample a unique parameter set for each dataset by applying multiplicative jitter (typically $5\%$--$10\%$) to the default literature parameters. For each data split (Train, Validation, Test), we sample a fresh initial condition $\mathbf{x}(t_0)$ by applying a $10\%$ per-dimension jitter to a nominal starting state, clipping to non-negative values where physically required.

\paragraph{Observation Model and Noise.}
We generate observed data $\mathbf{Y} \in \mathbb{R}^{T \times M}$ from the latent states $\mathbf{x}(t) \in \mathbb{R}^D$ via a linear observation operator $\mathbf{Y}(t) = \mathbf{H}\mathbf{x}(t) + \boldsymbol{\epsilon}$. The matrix $\mathbf{H}$ simulates heterogeneous sensor channels, including direct observations of state subsets, total sums (e.g., total population), or linear combinations. 

We evaluate three noise regimes to test solver robustness:
\begin{itemize}
    \item \textbf{Gaussian Noise:} $\epsilon \sim \mathcal{N}(0, \sigma^2)$.
    \item \textbf{Student-$t$ Noise:} $\epsilon \sim t_\nu$ with degrees of freedom $\nu=5$ (heavy-tailed).
    \item \textbf{Correlated Noise:} AR(1) processes with correlation coefficient $\alpha=0.8$.
\end{itemize}
The noise scale $\sigma$ is set to $5\%$ of the mean absolute amplitude of the state variable ($\sigma = 0.05 \times \text{mean}|\mathbf{x}|$). Full-state snapshots are provided at evenly spaced indices, with the count scaling as $\approx 1.5\sqrt{T}$ to mimic sparse measurement settings.

\paragraph{Dataset Specifications.}
Table~\ref{tab:dataset_master} details the grid dimensions and state variables for all systems evaluated.

\begin{table}[h]
  \centering
  \small
  \caption{Dataset specifications. $D$: state dimension; $\Delta t$: integration step; $N$: number of time points per split. All trajectories start at $t_0=0$.}
  \label{tab:dataset_master}
  \resizebox{\textwidth}{!}{%
  \begin{tabular}{lrrrrrl}
    \toprule
    \textbf{Dataset} & $D$ & $\Delta t$ & $N_{\mathrm{train}}$ & $N_{\mathrm{val/test}}$ & $t_{\mathrm{train}}^{\max}$ & \textbf{State Variables} \\
    \midrule
    \multicolumn{7}{l}{\textit{Population \& Ecological Dynamics}} \\
    SEIR & 4 & 0.2 & 500 & 200 & 99.8 & Susceptible, Exposed, Infected, Recovered \\
    SEIRH & 5 & 0.2 & 500 & 200 & 99.8 & SEIR + Hospitalized \\
    Viral & 4 & 0.2 & 500 & 200 & 99.8 & Target cells, Exposed, Infected, Virus \\
    Consumeristic & 4 & 0.2 & 500 & 200 & 99.8 & $x, y, z, w$ (Social dynamics) \\
    \midrule
    \multicolumn{7}{l}{\textit{Systems Biology \& Pharmacology (PK/PD)}} \\
    Neutrophil & 6 & 0.2 & 500 & 200 & 99.8 & Proliferating, Transit$_{1\text{--}3}$, Reserve, Circulating \\
    Colorectal & 7 & 0.2 & 500 & 200 & 99.8 & Tumor, Necrotic, Lymph, Cells, Macrophage, $I$, $A$ \\
    Tumor & 6 & 0.2 & 500 & 200 & 99.8 & PK/PD with binding \& tumor volume \\
    TMDD Lite & 5 & 0.2 & 500 & 200 & 99.8 & Drug, Receptor, Complex, Production, Internalization \\
    Tumor-Drug-Imm & 5 & 0.5 & 500 & 200 & 249.5 & Tumor, Macrophage, NK cells, IL-2, Kill rate \\
    \bottomrule
  \end{tabular}
  }
\end{table}

\subsection{MAAT Implementation Details}
\label{app:maat_impl}

\paragraph{Kernel State Recovery (KSR).}
We model each state dimension $d \in \{1,\dots,D\}$ using a Gaussian Radial Basis Function (RBF) kernel:
\[
\kappa_d(t,t')=\exp\!\Bigl(-\tfrac{(t-t')^2}{2\sigma_d^2}\Bigr).
\]
The recovered trajectory $\hat{x}_d(t)$ and its time-derivative $\widehat{\dot{x}}_d(t)$ are expanded over the grid points $t_j \in \mathcal{T}$ as:
\[
\hat x_d(t) = \sum_{j=1}^{T}U_{j,d}\,\kappa_d(t,t_j), \qquad
\widehat{\dot x}_d(t) = \sum_{j=1}^{T}U_{j,d}\,\partial_t\kappa_d(t,t_j).
\]

\paragraph{Loss Function and Optimization.}
We optimize the coefficient matrix $U \in \mathbb{R}^{T \times D}$ by minimizing a composite loss function $\mathcal{L}(U)$:
\begin{align}
\mathcal L(U) = 
&\; w_s\,\mathbb E_{(t_k,\mathbf s_k)\in\mathcal M}\!\bigl[\|\hat{\mathbf x}(t_k)-\mathbf s_k\|_2^2\bigr] 
+ w_i\,\mathbb E_{t\in\mathcal T}\!\bigl[\|\mathbf H\hat{\mathbf x}(t)-\mathbf y(t)\|_2^2\bigr] \nonumber \\
&+\gamma\,\mathbb E_{t\in\mathcal T}\!\bigl[\|\widehat{\dot{\mathbf x}}(t)-\mathbf f_0(\hat{\mathbf x}(t))\|_2^2\bigr] 
+\lambda\,\|U\|_F^2
+ w_+\,\|\min(\hat{\mathbf x}, 0)\|_2^2.
\label{eq:ksr_loss}
\end{align}
We use fixed weights:
  \begin{itemize}
      \item \textbf{Weights:} $w_s = w_i = 1.0$, $
  \gamma = 10^{-3}$, $\lambda = 10^{-6}$.
  \end{itemize}

\paragraph{Training Protocol.}
Training proceeds in two phases using the Adam optimizer ($\beta_1=0.99, \beta_2=0.999, \epsilon=10^{-8}$):
\begin{enumerate}
    \item \textbf{Length-scale Selection:} Kernel length-scales are set from the snapshot time
  variance, i.e., ($\sigma_d=\sqrt{\mathrm{Var}
  (\mathcal T)}$), and kept fixed during training (no
  length-scale sweep/tuning in these runs).
    \item \textbf{Optimization:} We optimize $U$ with learning rate $1.0$ for up to $20,000$ iterations, employing early stopping with a patience of $2,000$ steps based on validation loss.
\end{enumerate}
Implementation is in JAX, utilizing JIT compilation for efficient batched evaluation.

\subsection{Baseline Configurations}
\label{app:baselines}

All baselines use fixed hyperparameters across datasets to ensure fair comparison.

\paragraph{Smoothing \& Interpolation Baselines.}
\begin{itemize}
    \item \textbf{Cubic Spline:} Natural cubic splines (SciPy) with analytic derivatives.
    \item \textbf{RBF Interpolant:} Multiquadric RBFs with shape parameter $\varepsilon$ selected via grid search over $\{0.25, \dots, 4.0\} \times (1/\sqrt{\mathrm{Var}(\mathcal{T})})$. Derivatives via central differences ($\Delta = 10^{-3}\Delta t$).
    \item \textbf{Savitzky--Golay:} Window length 25, polynomial order 3, with boundary-aware padding. Analytic filter derivatives are used.
    \item \textbf{TV-Reg Diff:} Total Variation Regularized Differentiation with regularization $\alpha=0.01$, implemented via a Savitzky--Golay proxy (window 21, order 3) for computational efficiency on large grids.
\end{itemize}

\paragraph{Probabilistic \& Neural Baselines.}
\begin{itemize}
    \item \textbf{Gaussian Process (GP):} Independent RBF kernel GPs per dimension. Length-scales optimized via log-marginal likelihood maximization (3 random restarts) searching over factors $\{0.25, \dots, 4.0\} \times \text{std}(\mathcal{T})$.
    \item \textbf{Kalman Smoother:} Constant-velocity kinematic model ($q=1.0, r=0.1$). Derivatives extracted from the smoothed velocity states.
    \item \textbf{Neural ODE:} MLP vector field (width 64, depth 3, tanh activation). Integrated with Dopri5 ($dt_0=0.1$). Trained for 2,000 steps (Adam, lr=$10^{-3}$).
\end{itemize}

\subsection{Symbolic Regression Back-Ends}
Downstream equation discovery is performed on the trajectories recovered by \texttt{MAAT} or baselines.
\begin{itemize}
    \item \textbf{PySINDy:} Sparse identification with a polynomial library (degree $\le 2$). Optimizer: STLS with threshold $0.1$ and decay $0.9$.
    \item \textbf{PySR:} Evolutionary search with 20 iterations, population 1,000. Allowed operators: $\{+, \times\}$. Selection criterion: Validation derivative MSE.
\end{itemize}

\subsection{Computing Infrastructure}
All experiments were conducted on a
  workstation with dual AMD EPYC 7713
  CPUs (2 × 64 cores; 128 physical
  cores total) and an NVIDIA RTX 6000
  Ada Generation GPU (49,140 MiB VRAM,
  ~49 GB).

\end{document}